\documentclass[journal]{IEEEtran}

\usepackage[utf8]{inputenc}
\usepackage{color}
\usepackage{xcolor}
\usepackage{array}
\usepackage{verbatim}
\usepackage{float}
\usepackage{amsmath}
\usepackage{amsthm}
\usepackage{amssymb}
\usepackage{graphicx}
\usepackage{longtable}
\usepackage{multirow}
\usepackage{booktabs}
\usepackage{balance}

\usepackage[unicode=true,
bookmarks=false,
breaklinks=false,pdfborder={0 0 1},colorlinks=false]
{hyperref}
\hypersetup{
	colorlinks,bookmarksopen,bookmarksnumbered,citecolor=blue,urlcolor=blue}
\usepackage{cite}

\usepackage{lipsum}
\usepackage{mathtools}
\usepackage{cuted}

\usepackage{algorithmic}
\usepackage{longtable}

\floatstyle{ruled}
\newfloat{algorithm}{tbp}{loa}
\providecommand{\algorithmname}{Algorithm}
\floatname{algorithm}{\protect\algorithmname}

\makeatletter
\let\oldforeign@language\foreign@language
\DeclareRobustCommand{\foreign@language}[1]{%
	\lowercase{\oldforeign@language{#1}}}

\let\oldforeign@language\foreign@language
\DeclareRobustCommand{\foreign@language}[1]{%
	\lowercase{\oldforeign@language{#1}}}

\ifCLASSINFOpdf
\else
\fi

\hyphenation{op-tical net-works semi-conduc-tor}


\newcommand{\MYfooter}{\smash{
		\hfil\parbox[t][\height][t]{\textwidth}{\centering
			\thepage}\hfil\hbox{}}}

%
%

\def\ps@IEEEtitlepagestyle{%
	\def\@oddhead{\parbox[t][\height][t]{\textwidth}{\centering \scriptsize
			Personal use of this material is permitted. Permission from the author(s) and/or copyright holder(s), must be obtained for all other uses. Please contact us and provide details if you believe this document breaches copyrights.\\
			\noindent\makebox[\linewidth]{}
		}\hfil\hbox{}}%
	\def\@evenhead{\scriptsize\thepage \hfil \leftmark\mbox{}}%
	\def\@oddfoot{\parbox[t][\height][l]{\textwidth}{
			\vspace{-20pt}{\rule{\textwidth}{0.4pt}}\\ \footnotesize\underline{To cite this article:}
			{\bf{\footnotesize\textcolor{red}{H. A. Hashim, A. E.E. Eltoukhy, and A. Odry "Observer-based Controller for VTOL-UAVs Tracking using Direct Vision-Aided Inertial Navigation Measurements," ISA Transactions, vol. 137, pp. 133-143, 2023.}}} doi: \href{https://doi.org/10.1016/j.isatra.2022.12.014}{10.1016/j.isatra.2022.12.014}\\
			\noindent\makebox[\linewidth]
		}\hfil\hbox{}}%
	\def\@evenfoot{\MYfooter}}

\makeatother
\pagestyle{headings}

\thispagestyle{empty}


\newtheorem{defn}{Definition}

\newtheorem{lem}{Lemma}

\newtheorem{thm}{Theorem}
\newtheorem{rem}{Remark}

\newtheorem{assum}{Assumption}

\begin{document}
	\bstctlcite{IEEEexample:BSTcontrol}

\title{Observer-based Controller for VTOL-UAVs Tracking using Direct Vision-Aided Inertial Navigation Measurements}

\author{Hashim A. Hashim, Abdelrahman E.E. Eltoukhy, and Akos Odry
	\thanks{This work was supported in part by National Sciences and Engineering
		Research Council of Canada (NSERC), under the grants RGPIN-2022-04937, and in part by the Hong Kong Polytechnic University under grant P0036181 and RGC Hong Kong.}
	\thanks{H. A. Hashim is with the Department of Mechanical and Aerospace Engineering,
		Carleton University, Ottawa, Ontario, K1S-5B6, Canada, email: hhashim@carleton.ca}
	\thanks{A. E.E. Eltoukhy is with the Department of Industrial and Systems Engineering, The Hong Kong Polytechnic University, Hung Hum, 
	Hong Kong, e-mail: abdelrahman.eltoukhy@polyu.edu.hk}
	\thanks{A. Odry is with the Department of Mechatronics and Automation, Faculty of Engineering, University of Szeged, Moszkvai krt. 9, 6725 Szeged, Hungary}
}



\maketitle

\begin{abstract}
This paper proposes a novel observer-based controller for Vertical
Take-Off and Landing (VTOL) Unmanned Aerial Vehicle (UAV) designed
to directly receive measurements from a Vision-Aided Inertial Navigation
System (VA-INS) and produce the required thrust and rotational torque
inputs. The VA-INS is composed of a vision unit (monocular or stereo
camera) and a typical low-cost 6-axis Inertial Measurement Unit (IMU)
equipped with an accelerometer and a gyroscope. A major benefit of
this approach is its applicability for environments where the Global
Positioning System (GPS) is inaccessible. The proposed VTOL-UAV observer
utilizes IMU and feature measurements to accurately estimate attitude
(orientation), gyroscope bias, position, and linear velocity. Ability
to use VA-INS measurements directly makes the proposed observer design
more computationally efficient as it obviates the need for attitude
and position reconstruction. Once the motion components are estimated,
the observer-based controller is used to control the VTOL-UAV attitude,
angular velocity, position, and linear velocity guiding the vehicle
along the desired trajectory in six degrees of freedom (6 DoF). The
closed-loop estimation and the control errors of the observer-based
controller are proven to be exponentially stable starting from almost
any initial condition. To achieve global and unique VTOL-UAV representation
in 6 DoF, the proposed approach is posed on the Lie Group and the
design in unit-quaternion is presented. Although the proposed approach
is described in a continuous form, the discrete version is provided
and tested.
\end{abstract}

\begin{IEEEkeywords}
Vision-aided inertial navigation system, unmanned aerial vehicle, vertical take-off and landing, observer-based controller algorithm, landmark measurement, exponential stability.
\end{IEEEkeywords}

\IEEEpeerreviewmaketitle{}

\section{Introduction}

\IEEEPARstart{I}{n} the field of autonomous navigation, comprehensive autonomous modules
able to accurately estimate Unmanned Aerial Vehicle (UAV) motion components
and to provide control signals to successfully track the vehicle along
the desired trajectory are in great demand. When using a Vision-Aided
Inertial Navigation System (VA-INS) composed of a low-cost Inertial
Measurement Unit (IMU) and a vision unit (monocular or stereo camera),
the UAV motion components that require estimation will include orientation
(attitude), gyro bias, position, and linear velocity \cite{hashim2021ACC,hashim2021Navigation}.
Given the fact that rigid-body's attitude, position, and linear velocity
are generally unknown, they can be reconstructed utilizing sensor
measurements. UAV's orientation, commonly known as attitude, could
be obtained through a set of inertial-frame observations in addition
to the related body-frame measurements \cite{ivanov2018three,fei2021nano}.
However, it is crucial to note that widely available low-cost sensors
produce uncertain measurements leading to poor attitude determination
results. Therefore, alternative approaches, namely Gaussian filters
\cite{crassidis2007survey,chen2019higher,sabzevari2020symmetry} nonlinear
filters \cite{hashim2019SO3Wiley,grip2012attitude,hashim2018SO3Stochastic,zlotnik2016exponential},
and neuro-adaptive filters \cite{hashim2022adaptive,hashim2022NeuralFilter}
have been proposed providing better estimates in comparison to \cite{ivanov2018three,fei2021nano}.
A common choice of sensor for attitude estimation is a low-cost 6-axis
IMU composed of an accelerometer and a gyroscope. Moreover, an IMU
integrated with a vision unit can supply pose (\textit{i.e.}, attitude
and position) information of a vehicle navigating with six degrees
of freedom (6 DoF) \cite{hashim2020SE3Stochastic}. Widely used methods
of pose estimation include Gaussian filters \cite{filipe2015extended,wendel2006integrated}
and nonlinear filters \cite{hashim2020SE3Stochastic,baldwin2009nonlinear}.
Nonetheless, pose estimation solutions in \cite{filipe2015extended,wendel2006integrated,hashim2020SE3Stochastic,baldwin2009nonlinear}
produce good results only given the availability of linear velocity
measurements obtained, for instance, by the Global Positioning System
(GPS). In GPS-denied environments, however, rigid-body's linear velocity
is challenging to obtain \cite{hashim2021Navigation,hashim2022ExpVTOL}.
Also, the solutions in \cite{filipe2015extended,wendel2006integrated,hashim2020SE3Stochastic,baldwin2009nonlinear}
suppose that the accelerometer can supply the gravity measurements
in the vehicle's body-frame assuming negligible linear accelerations.
Several navigation solutions for acquiring rigid-body's attitude,
position, and linear velocity in absence of GPS have been proposed
that utilize IMU and feature measurements, namely indoor localization
\cite{xu2021maximum,song2020integrated}, a Kalman filter \cite{su2020variational},
an extended Kalman filter \cite{mourikis2009vision,zhong2020direct},
an unscented Kalman filter \cite{allotta2016unscented}, and a nonlinear
navigation filter based on VA-INS \cite{hashim2021ACC,hashim2021Navigation}.
All of the above-listed solutions address solely the estimation stage,
rather than proposing an estimator-based controller module.

Over the past few decades, UAVs, especially Vertical Take-Off and
Landing (VTOL)-UAVs, have become widely used stimulating interest
in and demand for UAV control solutions. Examples of such solutions
include backstepping control \cite{labbadi2019robust}, cascaded control
\cite{liao2016distributed,muslimov2020adaptive}, sliding mode control
\cite{zheng2014second,wu2019modeling}, hierarchical control \cite{liang2019novel},
formation control \cite{muslimov2020adaptive}, gain scheduling \cite{kaminer1998trajectory},
and prescribed performance \cite{koksal2020backstepping}, among others.
In spite of a significant effort made in \cite{labbadi2019robust,liao2016distributed,zheng2014second,wu2019modeling,liang2019novel,kaminer1998trajectory}
to control the rigid-body's pose, these solutions are heavily reliant
on precise knowledge of rigid-body's attitude, position, and angular
velocity. As previously stated, attitude and position can be obtained
using pose estimators known to produce good estimates, but yet these
estimates are not sufficiently accurate to ensure a safe control process.
In addition, angular velocity measurements supplied by a low-cost
IMU module are uncertain and are likely to be corrupted with unknown
bias \cite{hashim2021Navigation,hashim2022ExpVTOL}. As such, applying
the tracking control proposed in \cite{labbadi2019robust,liao2016distributed,zheng2014second,wu2019modeling,liang2019novel,kaminer1998trajectory}
to the information provided by a low-cost VA-INS module will in all
likelihood produce undesirable results potentially causing a UAV to
become unstable. Standalone observer and controller designs cannot
guarantee the interconnected observer-based controller module stability
\cite{hashim2022ExpVTOL}.

Controlling the UAV trajectory in presence of uncertain and bias-corrupted
measurements can be made possible by applying the image-based visual
servoing (IBVS) approach combined with an IMU module able to collect
the necessary UAV motion components, namely attitude, position, and
angular velocity \cite{hashim2021Navigation}. Examples of the above
approach include an autonomous landing of a VTOL-UAV using IBVS and
sliding mode control \cite{lee2012autonomous}, output-feedback control
for VTOL-UAV \cite{rafique2020output,li2021image}, cascaded control
for a quadrotor \cite{chen2019image}, model predictive control \cite{liu2021robust},
and a linear observer coupled with a translation and attitude controller
\cite{mokhtari2016extended}. The common feature of the above-mentioned
techniques \cite{lee2012autonomous,rafique2020output,li2021image,chen2019image,liu2021robust,mokhtari2016extended}
is a double loop structure reliant on Euler angles. The inner loop
controls the angular velocity using IMU data, whereas the outer stage
controls the thrust utilizing the vision measurements. The use of
Euler angles helps to visualize the three-dimensional orientation
of a rigid-body. However, the main weaknesses of the Euler angle representation
are the singularity at several configurations and inability to represent
the attitude globally \cite{hashim2019AtiitudeSurvey,shuster1993survey}.
Although unit-quaternion is not subject to singularity, it suffers
from non-uniqueness \cite{hashim2019AtiitudeSurvey,shuster1993survey}.
\textit{Lie Group}, in contrast, provides a nonsingular and unique
representation of the rigid-body's orientation \cite{grip2012attitude,hashim2019SO3Wiley}.
Considering the nonlinearity of the VTOL-UAV model dynamics and the
limitations of the Euler angles, the techniques in \cite{lee2012autonomous,rafique2020output,li2021image,chen2019image,liu2021robust,mokhtari2016extended}
are unable to guarantee global stability. A full-state observer-based
controller on the \textit{Lie Group} can bemployed for VTOL-UAVs to
resolve Euler angles singularities and non-uniqueness of unit-quaternion
\cite{hashim2022ExpVTOL}. However, the approaches in \cite{hashim2022ExpVTOL,lee2012autonomous,rafique2020output,chen2019image,liu2021robust,mokhtari2016extended}
require UAV pose reconstruction, which leads to a significant increase
in the computational cost. Moreover, attitude and position reconstructions
performed based on low-cost VA-INS measurements can be unreliable
due to high levels of uncertainties \cite{hashim2021Navigation,hashim2020SE3Stochastic}.
Considering the aforementioned limitations of the existing state-of-the-art
solutions, it becomes apparent that UAV observer-based controllers
able to use VA-INS measurements directly are in great demand. To this
end, the contributions of this paper are as follows:
\begin{itemize}
	\item A framework that allows for direct implementation of VA-INS measurements
	by a VTOL-UAV observer and controller has been established.
	\item A novel direct VA-INS-based nonlinear observer on the Lie Group that
	follows the true VTOL-UAV motion kinematics without the need for attitude
	and position reconstructions has been developed;
	\item The proposed observer successfully estimates the VTOL-UAV's attitude,
	gyro-bias (IMU uncertain measurements), position, and linear velocity
	guaranteeing almost global exponential stability of the error signals
	starting from almost any initial condition; and
	\item Novel control laws based on and tightly-coupled with the proposed
	VA-INS-based nonlinear observer, are developed to control the UAV's
	attitude, position, angular velocity, and linear velocity guaranteeing
	almost global exponential stability of the closed-loop error signals
	starting from almost any initial condition.
\end{itemize}
To the best of the authors knowledge, observer-based controllers for
VTOL-UAVs able to directly use VA-INS measurements without attitude
and position reconstructions guaranteeing almost global exponential
stability remain an unaddressed challenge.

This paper is composed of seven Sections. Section \ref{sec:Preliminaries-and-Math}
gives an overview of \textit{Lie Group}, math notation, identities,
establishes the problem, and introduces the available VA-INS measurements.
Section \ref{sec:VTOL_Observer} presents a direct nonlinear observer
for a VTOL-UAV in a continuous form. Section \ref{sec:VTOL_Controller}
introduces a novel control strategy for VTOL-UAV in a continuous form.
Section \ref{sec:VTOL_Implementation} presents the implementation
details in a discrete form. Section \ref{sec:SE3_Simulations} depicts
the effectiveness of the proposed methodology. Lastly, Section \ref{sec:SE3_Conclusion}
summarizes the work.

\section{Preliminaries and Problem Formulation\label{sec:Preliminaries-and-Math}}

\subsection{Preliminaries and Notation}

In this paper, the set of real numbers is described by $\mathbb{R}$,
an $a$-by-$b$ real dimensional space is represented by $\mathbb{R}^{a\times b}$,
and non-negative real numbers are denoted by $\mathbb{R}_{+}$. $||y||=\sqrt{y^{\top}y}$
refers to Euclidean norm of a vector $y\in\mathbb{R}^{a}$. $\mathbf{I}_{a}$
refers to an $a$-by-$a$ identity matrix, and $0_{a\times b}$ denotes
an $a$-by-$b$ zero matrix. $\lambda(W)=\{\lambda_{1},\lambda_{2},\ldots,\lambda_{a}\}$
stands for the set of eigenvalues of $W\in\mathbb{R}^{a\times a}$
with $\overline{\lambda}_{W}=\overline{\lambda}(W)$ being the set's
maximum value and $\underline{\lambda}_{W}=\underline{\lambda}(W)$
being the set's minimum value. Consider a UAV-VTOL traveling in 6
DoF where 
\begin{itemize}
	\item $\left\{ \mathcal{B}\right\} =\{e_{\mathcal{B}1},e_{\mathcal{B}2},e_{\mathcal{B}3}\}$
	denotes the vehicle's body-frame and
	\item $\left\{ \mathcal{I}\right\} =\{e_{1},e_{2},e_{3}\}$ refers to a
	fixed inertial-frame such that $e_{1}:=[1,0,0]^{\top}$, $e_{2}:=[0,1,0]^{\top}$,
	and $e_{3}:=[0,0,1]^{\top}$.
\end{itemize}
$\mathbb{SO}\left(3\right)$ denotes \textit{Special Orthogonal Group}
defined by 
\[
\mathbb{SO}\left(3\right)=\left\{ \left.R\in\mathbb{R}^{3\times3}\right|RR^{\top}=R^{\top}R=\mathbf{I}_{3}\text{, }{\rm det}\left(R\right)=+1\right\} 
\]
where $R\in\mathbb{SO}\left(3\right)$ is vehicle's orientation,
also known as attitude. $\mathfrak{so}\left(3\right)$ describes the
\textit{Lie-algebra} of $\mathbb{SO}\left(3\right)$ such that
\begin{align*}
	\mathfrak{so}\left(3\right) & =\left\{ \left.\left[\Omega\right]_{\times}\in\mathbb{R}^{3\times3}\right|\left[\Omega\right]_{\times}^{\top}=-\left[\Omega\right]_{\times}\right\} 
\end{align*}
where
\[
\left[\Omega\right]_{\times}=\left[\begin{array}{ccc}
	0 & -\Omega_{3} & \Omega_{2}\\
	\Omega_{3} & 0 & -\Omega_{1}\\
	-\Omega_{2} & \Omega_{1} & 0
\end{array}\right]\in\mathfrak{so}\left(3\right),\hspace{1em}\Omega=\left[\begin{array}{c}
	\Omega_{1}\\
	\Omega_{2}\\
	\Omega_{3}
\end{array}\right]
\]
$\mathbf{vex}:\mathfrak{so}\left(3\right)\rightarrow\mathbb{R}^{3}$
defines the map of $\left[\cdot\right]_{\times}$ to $\mathbb{R}^{3}$
where $\mathbf{vex}([\Omega]_{\times})=\Omega$ for all $\Omega\in\mathbb{R}^{3}$.
$\boldsymbol{\mathcal{P}}_{a}:\mathbb{R}^{3\times3}\rightarrow\mathfrak{so}\left(3\right)$
is the anti-symmetric projection operator, while $\boldsymbol{\Upsilon}(\cdot)$
is a composition mapping of $\mathbf{vex}\circ\boldsymbol{\mathcal{P}}_{a}$
given by
\begin{align}
	\boldsymbol{\mathcal{P}}_{a}(M) & =\frac{1}{2}(M-M^{\top})\in\mathfrak{so}\left(3\right),\hspace{1em}\forall M\in\mathbb{R}^{3\times3}\label{eq:VTOL_Pa}\\
	\boldsymbol{\Upsilon}(M) & =\mathbf{vex}(\boldsymbol{\mathcal{P}}_{a}(M))\in\mathbb{R}^{3},\hspace{1em}\forall M\in\mathbb{R}^{3\times3}\label{eq:VTOL_VEX}
\end{align}
Define the following map:
\begin{equation}
	||R||_{{\rm I}}=\frac{1}{4}{\rm Tr}\{\mathbf{I}_{3}-R\}\in[0,1],\hspace{1em}R\in\mathbb{SO}\left(3\right)\label{eq:VTOL_Ecul_Dist}
\end{equation}
Define $\mathbb{SE}_{2}\left(3\right)=\mathbb{SO}\left(3\right)\times\mathbb{R}^{3}\times\mathbb{R}^{3}\subset\mathbb{R}^{5\times5}$
\cite{barrau2016invariant} as the extended \textit{Special Euclidean
	Group} where
\begin{align}
	\mathbb{SE}_{2}\left(3\right) & =\{\left.X\in\mathbb{R}^{5\times5}\right|R\in\mathbb{SO}\left(3\right),P,V\in\mathbb{R}^{3}\}\label{eq:VTOL_SE2_3}\\
	X=f( & R^{\top},P,V)=\left[\begin{array}{ccc}
		R^{\top} & P & V\\
		0_{1\times3} & 1 & 0\\
		0_{1\times3} & 0 & 1
	\end{array}\right]\in\mathbb{SE}_{2}\left(3\right)\label{eq:VTOL_X}
\end{align}
such that $X$ is the homogeneous navigation matrix and $R$, $P$,
and $V$ describe the vehicle's attitude, position, and linear velocity,
respectively \cite{hashim2021ACC,hashim2021Navigation,hashim2022ExpVTOL,barrau2016invariant}.
One has 
\[
X^{-1}=\left[\begin{array}{ccc}
	R & -RP & -RV\\
	0_{1\times3} & 1 & 0\\
	0_{1\times3} & 0 & 1
\end{array}\right]\in\mathbb{SE}_{2}\left(3\right)
\]
Let $\mathcal{U}_{\mathcal{M}}=\mathfrak{so}\left(3\right)\times\mathbb{R}^{3}\times\mathbb{R}^{3}\times\mathbb{R}\subset\mathbb{R}^{5\times5}$
be defined as
\begin{align}
	\mathcal{U}_{\mathcal{M}} & =\left\{ \left.u(\text{[\ensuremath{\Omega\text{\ensuremath{]_{\times}}}}},V,e_{3},\kappa)\right|[\Omega\text{\ensuremath{]_{\times}}}\in\mathfrak{so}\left(3\right),V,e_{3}\in\mathbb{R}^{3},\kappa\in\mathbb{R}\right\} \nonumber \\
	& U=u([\Omega\text{\ensuremath{]_{\times}}},V,e_{3},\kappa)=\left[\begin{array}{ccc}
		[\Omega\text{\ensuremath{]_{\times}}} & V & e_{3}\\
		0_{1\times3} & 0 & 0\\
		0_{1\times3} & \kappa & 0
	\end{array}\right]\in\mathcal{U}_{\mathcal{M}}\label{eq:VTOL_Mu}
\end{align}
To learn more about the group $\mathbb{SE}_{2}\left(3\right)$, the
navigation matrix $X\in\mathbb{SE}_{2}\left(3\right)$, and $\mathcal{U}_{\mathcal{M}}$
consult \cite{hashim2021ACC,hashim2021Navigation,hashim2022ExpVTOL}.
The following identity is used throughout this paper:
\begin{align}
	{\rm Tr}\{N[\Omega]_{\times}\}= & {\rm Tr}\{\boldsymbol{\mathcal{P}}_{a}(N)[\Omega]_{\times}\},\hspace{1em}\Omega\in{\rm \mathbb{R}}^{3},N\in\mathbb{R}^{3\times3}\nonumber \\
	= & -2\mathbf{vex}(\boldsymbol{\mathcal{P}}_{a}(N))^{\top}\Omega\label{eq:VTOL_R_Identity2}\\{}
	[y\times z]_{\times}= & zy^{\top}-yz^{\top},\hspace{1em}y,z\in{\rm \mathbb{R}}^{3}\label{eq:VTOL_R_Identity3}
\end{align}

\subsection{Problem Formulation\label{sec:SE3_Problem-Formulation}}

Let $R\in\mathbb{SO}\left(3\right)$, $P\in\mathbb{R}^{3}$, and $V\in\mathbb{R}^{3}$
be the true attitude, position, and linear velocity of a VTOL-UAV
traveling in 6 DoF, respectively. $R$, $P$, and $V$ are assumed
to be completely unknown.%
{} The VTOL-UAV motion equations are as follows \cite{hashim2022ExpVTOL}:
\begin{align}
	& \begin{cases}
		\dot{R} & =-\left[\Omega\right]_{\times}R\\
		J\dot{\Omega} & =\left[J\Omega\right]_{\times}\Omega+\mathcal{T}
	\end{cases},\hspace{1em}R,\Omega,\mathcal{T}\in\{\mathcal{B}\}\label{eq:VTOL_Rotation}\\
	& \begin{cases}
		\dot{P} & =V\\
		\dot{V} & =ge_{3}-\frac{\Im}{m}R^{\top}e_{3}
	\end{cases},\hspace{1em}P,V\in\{\mathcal{I}\}\label{eq:VTOL_Translation}
\end{align}
where $\Omega\in\mathbb{R}^{3}$, $J\in\mathbb{R}^{3\times3}$, $m\in\mathbb{R}$,
$g\in\mathbb{R}$, and $e_{3}=[0,0,1]^{\top}$ describe angular velocity,
a constant symmetric positive definite inertia matrix, UAV's mass,
gravitational acceleration, and a basis vector, respectively. Also,
$\mathcal{T}\in\mathbb{R}^{3}$ and $\Im\in\mathbb{R}$ refer to rotational
torque input and thrust input, respectively. In view of \eqref{eq:VTOL_Rotation}
and \eqref{eq:VTOL_Translation}, the VTOL-UAV motion nonlinear equations
can be re-expressed as
\begin{equation}
	\begin{cases}
		\dot{X} & =XU-\mathcal{G}X\\
		J\dot{\Omega} & =\left[J\Omega\right]_{\times}\Omega+\mathcal{T}
	\end{cases},\hspace{1em}U,\mathcal{G}\in\mathcal{U}_{\mathcal{M}},\,X\in\mathbb{SE}_{2}\left(3\right)\label{eq:VTOL_Compact}
\end{equation}
where, based on the definition in \eqref{eq:VTOL_X} and \eqref{eq:VTOL_Mu},
$X$ is the navigation matrix, 
\[
U=u([\Omega\text{\ensuremath{]_{\times}}},0_{3\times1},-\frac{\Im}{m}e_{3},1)=\left[\begin{array}{ccc}
	[\Omega\text{\ensuremath{]_{\times}}} & 0_{3\times1} & -\frac{\Im}{m}e_{3}\\
	0_{1\times3} & 0 & 0\\
	0_{1\times3} & 1 & 0
\end{array}\right]
\]
and $\mathcal{G}=u(0_{3\times3},0_{3\times1},-ge_{3},1)$. Therefore,
it can be shown that the dynamics in \eqref{eq:VTOL_Compact} have
the map $\mathbb{SE}_{2}\left(3\right)\times\mathcal{U}_{\mathcal{M}}\rightarrow T_{X}\mathbb{SE}_{2}\left(3\right)\in\mathbb{R}^{5\times5}$
where $\dot{X}\in T_{X}\mathbb{SE}_{2}\left(3\right)$. Attitude and
position of a UAV can be extracted through a group of observations
in $\{\mathcal{I}\}$ and the corresponding measurements in $\{\mathcal{B}\}$
\cite{hashim2020SE3Stochastic}. Let $p_{i}$ denote the $i$th known
inertial-frame feature observation. The respective $i$th body-frame
feature measurement is given by \cite{hashim2021ACC,hashim2020SE3Stochastic}
\begin{align}
	y_{i} & =R(p_{i}-P)+b_{i}\in\mathbb{R}^{3},\hspace{1em}\forall i=1,\ldots,n\label{eq:VTOL_VRP}
\end{align}
where $b_{i}$ describes an unknown uncertainty.

\begin{assum}\label{Assum:VTOL_1Landmark}(UAV attitude and position
	observability) The attitude $R\in\mathbb{SO}\left(3\right)$ and position
	$P\in\mathbb{R}^{3}$ of a UAV can be defined if there are three or
	more non-collinear observations and their measurements defined in
	\eqref{eq:VTOL_VRP}.\end{assum}

Assumption \ref{Assum:VTOL_1Landmark} is common for problems involving
attitude and position estimation \cite{baldwin2009nonlinear,hashim2020SE3Stochastic}.
\begin{lem}
	\label{Lemm:vex_RI}\cite{hashim2019SO3Wiley} Let $R\in\mathbb{SO}\left(3\right)$,
	$M=M^{\top}\in\mathbb{R}^{3\times3}$ where $rank(M)\geq2$, and $\overline{M}={\rm Tr}\{M\}\mathbf{I}_{3}-M$.
	Thereby, the following two definitions hold:
	\begin{align}
		||\boldsymbol{\Upsilon}(R)||^{2} & =4(1-||R||_{{\rm I}})||R||_{{\rm I}}\label{eq:VTOL_lemm_vexRI}\\
		\underline{\lambda}_{\overline{M}}^{2}||R||_{{\rm I}} & \leq||\boldsymbol{\Upsilon}(RM)||^{2}\leq\overline{\lambda}_{\overline{M}}^{2}||R||_{{\rm I}}\label{eq:VTOL_lemm_vexRIM}
	\end{align}
\end{lem}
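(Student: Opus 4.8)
The plan is to reduce everything to the angle--axis description of $R$. Writing $R=\exp(\theta[u]_{\times})=\mathbf{I}_{3}+\sin\theta\,[u]_{\times}+(1-\cos\theta)[u]_{\times}^{2}$ with unit axis $u\in\mathbb{R}^{3}$ and rotation angle $\theta$, every quantity appearing in \eqref{eq:VTOL_lemm_vexRI}--\eqref{eq:VTOL_lemm_vexRIM} becomes a function of $(\theta,u)$ alone, which is the most economical setting for extracting both the identity and the two-sided bound.

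First I would dispatch \eqref{eq:VTOL_lemm_vexRI}. From ${\rm Tr}\{R\}=1+2\cos\theta$ and \eqref{eq:VTOL_Ecul_Dist} one gets $||R||_{\rm I}=\tfrac12(1-\cos\theta)$, while $R-R^{\top}=2\sin\theta\,[u]_{\times}$ gives $\boldsymbol{\Upsilon}(R)=\mathbf{vex}(\boldsymbol{\mathcal{P}}_{a}(R))=\sin\theta\,u$ and hence $||\boldsymbol{\Upsilon}(R)||^{2}=\sin^{2}\theta$. The identity then drops out of $\sin^{2}\theta=(1-\cos\theta)(1+\cos\theta)=4\,||R||_{\rm I}(1-||R||_{\rm I})$.

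For \eqref{eq:VTOL_lemm_vexRIM} I would first establish the commutator identity $[u]_{\times}M+M[u]_{\times}=[\overline{M}u]_{\times}$ for symmetric $M$; this is bilinear in $(u,M)$ and reduces, via \eqref{eq:VTOL_R_Identity3}, to a check on rank-one symmetric matrices. Substituting the angle--axis form of $R$, using $M=M^{\top}$, and applying \eqref{eq:VTOL_R_Identity3} to the remaining term $uu^{\top}M-Muu^{\top}=[Mu\times u]_{\times}$, the antisymmetric part of $RM$ collapses to $\boldsymbol{\Upsilon}(RM)=\tfrac12\sin\theta\,\overline{M}u+\tfrac12(1-\cos\theta)(Mu\times u)$. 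Since $Mu\times u\perp u$ and $Mu\times u\perp\overline{M}u$ (both scalar triple products vanish), the two contributions are orthogonal, so
\[
||\boldsymbol{\Upsilon}(RM)||^{2}=||R||_{\rm I}\Big[\tfrac12(1+\cos\theta)\,||\overline{M}u||^{2}+\tfrac12(1-\cos\theta)\,||Mu\times u||^{2}\Big],
\]
a convex combination of $||\overline{M}u||^{2}$ and $||Mu\times u||^{2}$. The spectral theorem applied to the symmetric $\overline{M}$ gives $||\overline{M}u||^{2}=u^{\top}\overline{M}^{2}u\in[\underline{\lambda}_{\overline{M}}^{2},\overline{\lambda}_{\overline{M}}^{2}]$, and $||Mu\times u||^{2}=||u\times\overline{M}u||^{2}\le||\overline{M}u||^{2}\le\overline{\lambda}_{\overline{M}}^{2}$, which yields the upper bound in \eqref{eq:VTOL_lemm_vexRIM} at once.

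The main obstacle is the lower bound. The convex-combination structure alone is too weak, since the cross-product term $||Mu\times u||^{2}$ degenerates to zero precisely when the axis $u$ aligns with an eigenvector of $M$; for such axes with $\theta\to\pi$ the bracket is dominated by the vanishing weight $\tfrac12(1+\cos\theta)$ and is \emph{not} uniformly bounded below by $\underline{\lambda}_{\overline{M}}^{2}$. This is exactly the antipodal-type configuration set apart by the ``almost any initial condition'' qualifier used throughout, and it is where the hypothesis $rank(M)\ge2$ (which guarantees $\overline{M}\succ0$, hence $\underline{\lambda}_{\overline{M}}>0$, for the positive semi-definite weighting matrices arising here) together with the geometry of the admissible error set must be invoked. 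I would therefore expect the lower bound to require restricting $R$ away from this unstable manifold and then estimating the bracket through the two smallest eigenvalue-sums of $M$ supplied by $rank(M)\ge2$.
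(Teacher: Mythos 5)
Your reduction to angle--axis form is sound and your computations check out: $||R||_{\rm I}=\tfrac12(1-\cos\theta)$, $\boldsymbol{\Upsilon}(R)=\sin\theta\,u$, the commutator identity $[u]_{\times}M+M[u]_{\times}=[\overline{M}u]_{\times}$, the orthogonal decomposition $\boldsymbol{\Upsilon}(RM)=\tfrac12\sin\theta\,\overline{M}u+\tfrac12(1-\cos\theta)(Mu\times u)$, and hence the identity \eqref{eq:VTOL_lemm_vexRI} and the upper bound in \eqref{eq:VTOL_lemm_vexRIM}. Note that the paper itself offers no proof of this lemma (it is imported by citation), so there is no in-paper argument to measure you against; your derivation is the relevant object.

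Your diagnosis of the lower bound is not merely a difficulty you failed to overcome --- the lower bound is false as printed, and your own formula proves it. Taking $u$ an eigenvector of $M$ and $\theta=\pi$, i.e.\ $R=2uu^{\top}-\mathbf{I}_{3}$, gives $Mu\times u=0$ and $1+\cos\theta=0$, so $||\boldsymbol{\Upsilon}(RM)||^{2}=0$ while $||R||_{\rm I}=1$ and $\underline{\lambda}_{\overline{M}}>0$ under $rank(M)\geq2$; by continuity the bound also fails on an open neighbourhood of such $R$, so restricting to $R\notin\mathcal{S}_{u}$ cannot rescue it pointwise. An even quicker contradiction: $M=\mathbf{I}_{3}$ satisfies the hypotheses with $\underline{\lambda}_{\overline{M}}=2$, and then \eqref{eq:VTOL_lemm_vexRIM} would assert $4||R||_{\rm I}\leq||\boldsymbol{\Upsilon}(R)||^{2}$, contradicting \eqref{eq:VTOL_lemm_vexRI}. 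What your convex-combination formula actually yields, by discarding the nonnegative cross-product term and using $||\overline{M}u||^{2}\geq\underline{\lambda}_{\overline{M}}^{2}$ together with $\tfrac12(1+\cos\theta)=1-||R||_{\rm I}$, is the corrected two-sided estimate
\begin{equation*}
\underline{\lambda}_{\overline{M}}^{2}\,(1-||R||_{{\rm I}})\,||R||_{{\rm I}}\;\leq\;||\boldsymbol{\Upsilon}(RM)||^{2}\;\leq\;\overline{\lambda}_{\overline{M}}^{2}\,||R||_{{\rm I}},
\end{equation*}
which mirrors \eqref{eq:VTOL_lemm_vexRI} and is the form this lemma takes in the cited attitude-filtering literature. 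So the missing ingredient is not a cleverer estimate of the bracket via eigenvalue sums of $M$: no such estimate exists. The factor $(1-||R||_{{\rm I}})$ must be carried along, and in the downstream Lyapunov arguments (e.g.\ \eqref{eq:VTOL_Lyap_Lo1dot-1}) it is bounded below by a positive constant using $\tilde{R}_{o}(0)\notin\mathcal{S}_{u}$ and the monotone decrease of the Lyapunov function, which is precisely where the ``almost global'' qualifier enters. You should state the corrected inequality and stop there rather than search for a proof of the printed one.
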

\begin{defn}
	\label{def:Unstable-set}\cite{hashim2019SO3Wiley} A forward invariant
	non-attractive set $\mathcal{S}_{u}\subseteq\mathbb{SO}\left(3\right)$
	is defined by
	\begin{equation}
		\mathcal{S}_{u}=\{\left.R(0)\in\mathbb{SO}\left(3\right)\right|{\rm Tr}\{R(0)\}=-1\}\label{eq:SO3_PPF_STCH_SET}
	\end{equation}
	with $R(0)\in\mathcal{S}_{u}$ if one of the following three conditions
	holds: $R(0)={\rm diag}(1,-1,-1)$, $R(0)={\rm diag}(-1,1,-1)$, or
	$R(0)={\rm diag}(-1,-1,1)$.
\end{defn}
\begin{lem}
	\label{lem:Barbalat}(Barbalat Lemma Extension) Let $x(t)$ be a solution
	of a differential equation $\dot{x}(t)=f(t)+g(t)$ where $f(t)$ is
	a uniformly continuous function. Suppose that $\lim_{t\rightarrow\infty}x(t)=k_{c}$
	and $\lim_{t\rightarrow\infty}g(t)=0$, where $k_{c}$ denotes a constant.
	Then, $\lim_{t\rightarrow\infty}\dot{x}(t)=0$.
\end{lem}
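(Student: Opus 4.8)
The plan is to reduce the claim to showing that $\lim_{t\rightarrow\infty}f(t)=0$; once this is established, the hypothesis $\lim_{t\rightarrow\infty}g(t)=0$ together with the identity $\dot{x}(t)=f(t)+g(t)$ yields $\lim_{t\rightarrow\infty}\dot{x}(t)=0$ at once by additivity of limits. Thus the whole argument concentrates on the uniformly continuous term $f$, and the perturbation $g$ enters only through its asymptotic smallness.

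To show $f(t)\rightarrow 0$, I would argue by contradiction in the classical Barbalat style, but integrating $\dot{x}$ rather than $f$ alone. Suppose $f(t)\not\rightarrow 0$. Then there exist $\varepsilon>0$ and a sequence $t_n\rightarrow\infty$ with $|f(t_n)|\geq\varepsilon$; passing to a subsequence I may assume the sign of $f(t_n)$ is fixed, say $f(t_n)\geq\varepsilon$ (the case $f(t_n)\leq-\varepsilon$ being symmetric). Uniform continuity of $f$ then supplies a single $\delta>0$, independent of $n$, such that $|f(t)-f(s)|<\varepsilon/2$ whenever $|t-s|<\delta$, and hence $f(t)\geq\varepsilon/2$ for every $t\in[t_n,t_n+\delta]$.

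Next I would bring in the perturbation $g$. Because $g(t)\rightarrow 0$, there is a $T$ with $|g(t)|<\varepsilon/4$ for all $t\geq T$; for $n$ large enough that $t_n\geq T$ this forces $\dot{x}(t)=f(t)+g(t)\geq \varepsilon/2-\varepsilon/4=\varepsilon/4>0$ on the entire interval $[t_n,t_n+\delta]$. Integrating and using the fundamental theorem of calculus (the solution $x$ being absolutely continuous) produces the fixed lower bound $x(t_n+\delta)-x(t_n)=\int_{t_n}^{t_n+\delta}\dot{x}(\tau)\,d\tau\geq \delta\varepsilon/4>0$ for all large $n$. This is the crux of the argument: the sign-definiteness of $\dot{x}$ on each interval, guaranteed by the uniform-continuity-induced lower bound on $f$ dominating the now-small $g$, converts the non-vanishing of $f$ into a non-vanishing increment of $x$ over intervals of fixed length $\delta$.

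Finally I would invoke convergence of $x$. Since $\lim_{t\rightarrow\infty}x(t)=k_{c}$, the function $x$ is Cauchy at infinity, so $x(t_n+\delta)-x(t_n)\rightarrow 0$ as $n\rightarrow\infty$, contradicting the uniform positive lower bound $\delta\varepsilon/4$. Hence $f(t)\rightarrow 0$, and the lemma follows. The main obstacle, and the reason one cannot simply apply the standard Barbalat lemma to $f$, is that $\int_0^t f\,d\tau$ need not converge: only $\int_0^t(f+g)\,d\tau=x(t)-x(0)$ is guaranteed to have a finite limit, and $g\rightarrow 0$ does not force $\int_0^t g\,d\tau$ to converge (e.g. $g(\tau)=1/(1+\tau)$). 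The contradiction argument above sidesteps this difficulty entirely by never splitting the integral and instead exploiting the fixed sign of $\dot{x}$ on intervals of fixed length.
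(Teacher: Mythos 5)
Your proof is correct. Note, however, that the paper states Lemma~\ref{lem:Barbalat} without any proof (and without a citation), so there is no in-paper argument to compare against; your write-up in fact supplies the missing justification. The route you take is the natural one: a Barbalat-style contradiction argument run on $\dot{x}=f+g$ rather than on $f$ alone, using uniform continuity of $f$ to get a sign-definite lower bound $f\geq\varepsilon/2$ on intervals of fixed length $\delta$, the hypothesis $g\to 0$ to ensure this bound survives the perturbation, and the convergence of $x$ to kill the resulting fixed increment $\delta\varepsilon/4$. Your closing observation is the right one to make: the classical Barbalat lemma cannot be applied to $f$ (since $\int_0^t f$ need not converge) nor to $f+g$ (which need not be uniformly continuous), so the contradiction must be run on the undecomposed integral $x(t_n+\delta)-x(t_n)$. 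Two cosmetic points only: the uniform-continuity bound gives $f(t)\geq\varepsilon/2$ strictly on $|t-t_n|<\delta$, so either integrate over $[t_n,t_n+\delta/2]$ or note that the bound extends to the closed endpoint by continuity; and since the paper applies the lemma to matrix-valued quantities such as $\tilde{R}_{o}$, one should remark that the scalar argument is applied componentwise.
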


\section{Direct Observer Design on Lie Group \label{sec:VTOL_Observer}}

This Section aims to design a nonlinear observer for a VTOL-UAV on
the \textit{Lie Group} that can be implemented using a group of measurements
directly alleviating the neccessity for attitude and position reconstruction.
Define an angular velocity measurement obtained by a gyroscope as
\begin{equation}
	\Omega_{m}=\Omega+b_{\Omega}\in\mathbb{R}^{3}\label{eq:eq:VTOL_Om_m}
\end{equation}
where $b_{\Omega}$ denotes unknown uncertainty (bias) of the gyro
measurement (for more information abiut IMU visit \cite{hashim2019SO3Wiley,grip2012attitude,hashim2018SO3Stochastic,zlotnik2016exponential}).
Let $b_{\Omega}$ be a positive constant, and let 
\[
\hat{X}=f(\hat{R}^{\top},\hat{P},\hat{V})=\left[\begin{array}{ccc}
	\hat{R}^{\top} & \hat{P} & \hat{V}\\
	0_{1\times3} & 1 & 0\\
	0_{1\times3} & 0 & 1
\end{array}\right]\in\mathbb{SE}_{2}\left(3\right)
\]
where $\hat{R}\in\mathbb{SO}\left(3\right)$, $\hat{b}_{\Omega}\in\mathbb{R}^{3}$,
$\hat{P}\in\mathbb{R}^{3}$, and $\hat{V}\in\mathbb{R}^{3}$ describe
estimates of navigation matrix, attitude, gyro bias, position, and
linear velocity, respectively. Define the error between the true and
the estimated values as follows:
\begin{align}
	\tilde{R}_{o}= & \hat{R}^{\top}R\label{eq:VTOL_Rerr}\\
	\tilde{b}_{\Omega}= & b_{\Omega}-\hat{b}_{\Omega}\label{eq:VTOL_bOmerr}\\
	\tilde{P}_{o}= & \hat{P}-\tilde{R}_{o}P\label{eq:VTOL_Perr}\\
	\tilde{V}_{o}= & \hat{V}-\tilde{R}_{o}V\label{eq:VTOL_Verr}
\end{align}
where $\tilde{R}_{o}$ is an attitude error, $\tilde{b}_{\Omega}$
is a gyroscope error, $\tilde{P}_{o}$ is a position error, and $\tilde{V}_{o}$
is a linear velocity error for all $\tilde{R}_{o}\in\mathbb{SO}\left(3\right)$
and $\tilde{b}_{\Omega},\tilde{P}_{o},\tilde{V}_{o}\in\mathbb{R}^{3}$.
According to \eqref{eq:VTOL_Rerr}, \eqref{eq:VTOL_Perr}, and \eqref{eq:VTOL_Verr},
one finds that $\tilde{X}_{o}=\hat{X}X^{-1}=f(\tilde{R}_{o},\tilde{P}_{o},\tilde{V}_{o})\in\mathbb{SE}_{2}\left(3\right)$.

\subsection{Measurement Set-up}

Let $s_{i}$ denote the confidence level of the $i$th sensor measurement,
and define $s_{T}=\sum_{i=1}^{n}s_{i}$ such that Assumption \ref{Assum:VTOL_1Landmark}
is met ($n\geq3$). Let $y_{i}=R(p_{i}-P)$ and $p_{c}=\frac{1}{s_{T}}\sum_{i=1}^{n}s_{i}p_{i}$,
and define $M=\sum_{i=1}^{n}s_{i}(p_{i}-p_{c})(p_{i}-p_{c})^{\top}$
which can be easily transformed into $M=\sum_{i=1}^{n}s_{i}p_{i}p_{i}^{\top}-s_{T}p_{c}p_{c}^{\top}$.
Therefore,
\begin{align}
	\sum_{i=1}^{n}s_{i}\hat{R}^{\top}y_{i} & (p_{i}-p_{c})^{\top}=\sum_{i=1}^{n}s_{i}\hat{R}^{\top}R(p_{i}-P)(p_{i}-p_{c})^{\top}\nonumber \\
	= & \tilde{R}_{o}\sum_{i=1}^{n}s_{i}(p_{i}p_{i}^{\top}-Pp_{i}^{\top}-p_{i}p_{c}^{\top}+Pp_{c}^{\top})\nonumber \\
	= & \tilde{R}_{o}(\sum_{i=1}^{n}s_{i}p_{i}p_{i}^{\top}-s_{T}p_{c}p_{c}^{\top})=\tilde{R}_{o}M\label{eq:VTOL_RM}
\end{align}
Therefore, one shows
\begin{align}
	& \boldsymbol{\mathcal{P}}_{a}(\tilde{R}_{o}M)=\boldsymbol{\mathcal{P}}_{a}(\sum_{i=1}^{n}s_{i}\hat{R}^{\top}y_{i}(p_{i}-p_{c})^{\top})\nonumber \\
	& \hspace{1.5em}=\sum_{i=1}^{n}\frac{s_{i}}{2}\left(\hat{R}^{\top}y_{i}(p_{i}-p_{c})^{\top}-(p_{i}-p_{c})y_{i}^{\top}\hat{R}\right)\label{eq:VTOL_Pa_RM}
\end{align}
Based on \eqref{eq:VTOL_VEX}, $\boldsymbol{\Upsilon}(\tilde{R}_{o}M)=\mathbf{vex}(\boldsymbol{\mathcal{P}}_{a}(\tilde{R}_{o}M))$.
Consequently, one has
\begin{align}
	\boldsymbol{\Upsilon}(\tilde{R}_{o}M) & =\sum_{i=1}^{n}\frac{s_{i}}{2}\left((p_{i}-p_{c})\times\hat{R}^{\top}y_{i}\right)\label{eq:VTOL_VEX_RM}
\end{align}
Also, one finds
\begin{align}
	\sum_{i=1}^{n}s_{i}\tilde{y}_{i} & =\sum_{i=1}^{n}s_{i}(\hat{P}+\hat{R}^{\top}y_{i}-p_{i})\nonumber \\
	& =\sum_{i=1}^{n}s_{i}\tilde{P}_{o}+\sum_{i=1}^{n}s_{i}(\tilde{R}_{o}p_{i}-p_{i})\nonumber \\
	& =s_{T}\tilde{P}_{o}+s_{T}(\tilde{R}_{o}-\mathbf{I}_{3})p_{c}\label{eq:VTOL_ye}
\end{align}
Based on \eqref{eq:VTOL_R_Identity3} and \eqref{eq:VTOL_ye}, if
$\sum_{i=1}^{n}s_{i}\tilde{y}_{i}\rightarrow0$ and $\tilde{R}_{o}\rightarrow\mathbf{I}_{3}$,
one has $\tilde{P}_{o}\rightarrow0$. Let us summarize all the aforementioned
measurements as follows:
\begin{equation}
	\begin{cases}
		p_{c} & =\frac{1}{s_{T}}\sum_{i=1}^{n}s_{i}p_{i},\hspace{1em}s_{T}=\sum_{i=1}^{n}s_{i}\\
		M & =\sum_{i=1}^{n}s_{i}p_{i}p_{i}^{\top}-s_{T}p_{c}p_{c}^{\top}\\
		\tilde{R}_{o}M & =\sum_{i=1}^{n}s_{i}\hat{R}^{\top}y_{i}(p_{i}-p_{c})^{\top}\\
		\boldsymbol{\Upsilon}(\tilde{R}_{o}M) & =\sum_{i=1}^{n}\frac{s_{i}}{2}\left((p_{i}-p_{c})\times\hat{R}^{\top}y_{i}\right)\\
		\sum_{i=1}^{n}s_{i}\tilde{y}_{i} & =\sum_{i=1}^{n}s_{i}(\hat{P}+\hat{R}^{\top}y_{i}-p_{i})\\
		& =s_{T}\tilde{P}_{o}+s_{T}(\tilde{R}_{o}-\mathbf{I}_{3})p_{c}
	\end{cases}\label{eq:VTOL_Measurements}
\end{equation}

\subsection{Direct Observer Design }

The objective of the designed observer is to drive $\hat{R}\rightarrow R$,
$\hat{b}_{\Omega}\rightarrow b_{\Omega}$, $\hat{P}\rightarrow P$,
and $\hat{V}\rightarrow V$ using direct measurements such that $\lim_{t\rightarrow\infty}\tilde{R}_{o}=\mathbf{I}_{3}$
and $\lim_{t\rightarrow\infty}\tilde{b}_{\Omega}=\lim_{t\rightarrow\infty}\tilde{P}_{o}=\lim_{t\rightarrow\infty}\tilde{V}_{o}=0_{3\times1}$.
Consider $\boldsymbol{\Upsilon}(\tilde{R}_{o}M)$ and $\sum_{i=1}^{n}s_{i}\tilde{y}_{i}$
are defined with respect to the direct measurements in \eqref{eq:VTOL_Measurements}:
\begin{align*}
	\boldsymbol{\Upsilon}(\tilde{R}_{o}M)= & \sum_{i=1}^{n}\frac{s_{i}}{2}\left((p_{i}-p_{c})\times\hat{R}^{\top}y_{i}\right)\\
	\sum_{i=1}^{n}s_{i}\tilde{y}_{i}= & \sum_{i=1}^{n}s_{i}(\hat{P}+\hat{R}^{\top}y_{i}-p_{i})
\end{align*}
Consider the following compact form of the novel direct nonlinear
observer for VTOL-UAV on the \textit{Lie Group} $\dot{\hat{X}}=\mathbb{SE}_{2}\left(3\right)\times\mathcal{U}_{\mathcal{M}}\rightarrow T_{\hat{X}}\mathbb{SE}_{2}\left(3\right)$:
\begin{equation}
	\begin{cases}
		\dot{\hat{X}} & =\hat{X}\hat{U}-W\hat{X},\hspace{1em}\hat{U},W\in\mathcal{U}_{\mathcal{M}},\,\hat{X}\in\mathbb{SE}_{2}\left(3\right)\\
		\dot{\hat{b}}_{\Omega} & =\gamma_{o}\hat{R}\boldsymbol{\Upsilon}(\tilde{R}_{o}M)
	\end{cases}\label{eq:VTOL_ObsvCompact}
\end{equation}
where $\hat{X}=f(\hat{R}^{\top},\hat{P},\hat{V})$ denotes the navigation
matrix estimate, $\hat{U}=u([\Omega_{m}-\hat{b}_{\Omega}\text{\ensuremath{]_{\times}}},0_{3\times1},-\frac{\Im}{m}e_{3},1)$,
and $W=u([w_{\Omega}]_{\times},w_{V},w_{a},1)$, see \eqref{eq:VTOL_X}
and \eqref{eq:VTOL_Mu}. $J$, $g$, and $m$ stand for vehicle's
inertia matrix, gravitational acceleration, and mass, respectively.
It becomes apparent that $\dot{\hat{X}}$ in \eqref{eq:VTOL_ObsvCompact}
mimics the true VTOL-UAV motion dynamics. Also, the correction factors
in $W$ are as follows: 
\begin{equation}
	\begin{cases}
		w_{\Omega} & =k_{o1}\boldsymbol{\Upsilon}(\tilde{R}_{o}M)\\
		w_{V} & =k_{o2}\sum_{i=1}^{n}s_{i}\tilde{y}_{i}-\frac{1}{s_{T}}[w_{\Omega}]_{\times}(\sum_{i=1}^{n}s_{i}\tilde{y}_{i}+s_{T}p_{c})\\
		w_{a} & =-ge_{3}+k_{o3}\sum_{i=1}^{n}s_{i}\tilde{y}_{i}
	\end{cases}\label{eq:VTOL_Wcorr}
\end{equation}
Furthermore, $\gamma_{o}$, $k_{o1}$, $k_{o2}$, and $k_{o3}$ are
positive constants. The nonlinear estimator in \eqref{eq:VTOL_ObsvCompact}
can be detailed as follows:
\begin{align}
	\dot{\hat{R}} & =-[\Omega_{m}-\hat{b}_{\Omega}]_{\times}\hat{R}+\hat{R}[w_{\Omega}]_{\times}\label{eq:VTOL_Restdot}\\
	\dot{\hat{b}}_{\Omega} & =\gamma_{o}\hat{R}\boldsymbol{\Upsilon}(\tilde{R}_{o}M)\label{eq:VTOL_Omestdot}\\
	\dot{\hat{P}} & =\hat{V}-[w_{\Omega}]_{\times}\hat{P}-w_{V}\label{eq:VTOL_Pestdot}\\
	\dot{\hat{V}} & =-\frac{\Im}{m}\hat{R}^{\top}e_{3}-[w_{\Omega}]_{\times}\hat{V}-w_{a}\label{eq:VTOL_Vestdot}
\end{align}

\begin{thm}
	\label{thm:Theorem1}Consider the true VTOL-UAV motion dynamics in
	\eqref{eq:VTOL_Rotation} and \eqref{eq:VTOL_Translation}, and let
	Assumption \ref{Assum:VTOL_1Landmark} hold true where $y_{i}=R(p_{i}-P)$
	for all $i=1,2,\ldots,n$. Let the observer in \eqref{eq:VTOL_ObsvCompact}
	and the correction factors in \eqref{eq:VTOL_Wcorr} be coupled with
	the measurements in \eqref{eq:VTOL_Measurements}. Then for $\tilde{R}_{o}(0)\notin\mathcal{S}_{u}$,
	$\lim_{t\rightarrow\infty}\tilde{R}_{o}=\mathbf{I}_{3}$, $\lim_{t\rightarrow\infty}\tilde{b}_{\Omega}=\lim_{t\rightarrow\infty}\tilde{P}_{o}=\lim_{t\rightarrow\infty}\tilde{V}_{o}=0_{3\times1}$,
	and the closed-loop error signals are uniformly almost globally exponentially
	stable.
\end{thm}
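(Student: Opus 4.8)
The plan is to recast the claim as a Lyapunov stability problem on the error manifold, first deriving the closed-loop dynamics of the four error signals $(\tilde{R}_o,\tilde{b}_\Omega,\tilde{P}_o,\tilde{V}_o)$ in \eqref{eq:VTOL_Rerr}--\eqref{eq:VTOL_Verr}, and then exploiting the triangular (cascade) structure that the correction design in \eqref{eq:VTOL_Wcorr} is built to produce. First I would differentiate \eqref{eq:VTOL_Rerr} using \eqref{eq:VTOL_Rotation} and \eqref{eq:VTOL_Restdot}, substitute the gyro model $\Omega_m=\Omega+b_\Omega$ together with the rotation identity $\hat{R}^\top[v]_\times R=[\hat{R}^\top v]_\times\tilde{R}_o$, and obtain $\dot{\tilde{R}}_o=([\hat{R}^\top\tilde{b}_\Omega]_\times-[w_\Omega]_\times)\tilde{R}_o$, which exposes that the \emph{only} attitude perturbation is the bias term $\hat{R}^\top\tilde{b}_\Omega$. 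Differentiating \eqref{eq:VTOL_Perr}--\eqref{eq:VTOL_Verr} the same way, using $\tilde{R}_o R^\top=\hat{R}^\top$ so the thrust/gravity terms cancel, and then inserting the measurement identity $\tfrac{1}{s_T}(\sum_i s_i\tilde{y}_i+s_T p_c)=\tilde{P}_o+\tilde{R}_o p_c$ from \eqref{eq:VTOL_Measurements} into $w_V$, I expect the translational errors to collapse to $\dot{\tilde{P}}_o=-k_{o2}s_T\tilde{P}_o+\tilde{V}_o+\varepsilon_P$ and $\dot{\tilde{V}}_o=-k_{o3}s_T\tilde{P}_o-[w_\Omega]_\times\tilde{V}_o+\varepsilon_V$, where $\varepsilon_P,\varepsilon_V$ collect residuals proportional to $(\tilde{R}_o-\mathbf{I}_3)$, $w_\Omega$, and $\hat{R}^\top\tilde{b}_\Omega$, all vanishing as $(\tilde{R}_o,\tilde{b}_\Omega)\to(\mathbf{I}_3,0)$ provided the true $P,V$ stay bounded.

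For the attitude--bias subsystem I would take the weighted potential $\mathcal{L}_1=\tfrac{1}{2}{\rm Tr}\{(\mathbf{I}_3-\tilde{R}_o)M\}+\tfrac{1}{2\gamma_o}\|\tilde{b}_\Omega\|^2$, which is nonnegative since $M\succeq 0$ and vanishes only at $\tilde{R}_o=\mathbf{I}_3,\tilde{b}_\Omega=0$ away from the critical configurations of $\mathcal{S}_u$. Differentiating the first term and applying the trace identity \eqref{eq:VTOL_R_Identity2} to $\dot{\tilde{R}}_o M$ produces a cross term $\boldsymbol{\Upsilon}(\tilde{R}_o M)^\top\hat{R}^\top\tilde{b}_\Omega$ and a dissipation term $-k_{o1}\|\boldsymbol{\Upsilon}(\tilde{R}_o M)\|^2$; the bias update \eqref{eq:VTOL_Omestdot} is designed precisely so that $\dot{\mathcal{L}}_b=-\boldsymbol{\Upsilon}(\tilde{R}_o M)^\top\hat{R}^\top\tilde{b}_\Omega$ cancels the cross term, leaving $\dot{\mathcal{L}}_1=-k_{o1}\|\boldsymbol{\Upsilon}(\tilde{R}_o M)\|^2\le 0$. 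Invoking Lemma \ref{Lemm:vex_RI}, namely $\|\boldsymbol{\Upsilon}(\tilde{R}_o M)\|^2\ge\underline{\lambda}_{\overline{M}}^2\|\tilde{R}_o\|_{{\rm I}}$ with ${\rm rank}(M)\ge 2$ guaranteed by Assumption \ref{Assum:VTOL_1Landmark}, bounds the dissipation below by a positive multiple of $\|\tilde{R}_o\|_{{\rm I}}$. This yields boundedness of all states and square-integrability of $\boldsymbol{\Upsilon}(\tilde{R}_o M)$; after excluding the forward-invariant non-attractive set of Definition \ref{def:Unstable-set} for $\tilde{R}_o(0)\notin\mathcal{S}_u$, I get $\|\tilde{R}_o\|_{{\rm I}}\to 0$, and applying Lemma \ref{lem:Barbalat} to $\dot{\tilde{R}}_o$ forces $[\hat{R}^\top\tilde{b}_\Omega]_\times\to 0$, hence $\tilde{b}_\Omega\to 0$.

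For the translational errors I would use the cascade. The nominal block is governed by $\begin{bmatrix}-k_{o2}s_T\mathbf{I}_3 & \mathbf{I}_3\\-k_{o3}s_T\mathbf{I}_3 & 0_{3\times3}\end{bmatrix}$, whose per-axis characteristic polynomial $\lambda^2+k_{o2}s_T\lambda+k_{o3}s_T$ is Hurwitz for the positive gains $k_{o2},k_{o3}$ and $s_T>0$, so the unforced part is exponentially stable; the skew term $-[w_\Omega]_\times\tilde{V}_o$ is energy-neutral in $\|\tilde{V}_o\|^2$ and can be absorbed by a cross-weighted quadratic $\mathcal{L}_2=\tfrac{c_1}{2}\|\tilde{P}_o\|^2+\tfrac{c_2}{2}\|\tilde{V}_o\|^2+c_3\tilde{P}_o^\top\tilde{V}_o$ with appropriately chosen $c_i$. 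Treating $\varepsilon_P,\varepsilon_V$ as a vanishing input, a standard input-to-state/cascade argument then transfers the decay of the upstream attitude--bias errors to exponential convergence $(\tilde{P}_o,\tilde{V}_o)\to 0_{3\times1}$.

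The main obstacle is upgrading the attitude--bias conclusion from asymptotic to \emph{exponential}, because $\dot{\mathcal{L}}_1$ is only negative semidefinite in the $\tilde{b}_\Omega$ direction (it vanishes whenever $\tilde{R}_o=\mathbf{I}_3$ regardless of the bias). To obtain a rate I expect to either (i) augment $\mathcal{L}_1$ with a small cross term coupling $\tilde{b}_\Omega$ to $\boldsymbol{\Upsilon}(\tilde{R}_o M)$ and re-tune the gains so the full quadratic form is negative definite near the equilibrium, or (ii) linearize the coupled $(\tilde{R}_o,\tilde{b}_\Omega)$ dynamics at $(\mathbf{I}_3,0)$ and show the resulting matrix is Hurwitz; here the strict eigenvalue bound $\underline{\lambda}_{\overline{M}}>0$ on $\overline{M}={\rm Tr}\{M\}\mathbf{I}_3-M$ is exactly what renders the bias observable through the attitude-correction channel, and combining local exponential stability with the almost-global asymptotic convergence established above gives the uniform almost-global exponential claim away from $\mathcal{S}_u$. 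The second delicate point is confirming that $\varepsilon_P,\varepsilon_V$ decay at an exponential (not merely asymptotic) rate, which hinges entirely on step (i)/(ii) delivering an exponential rate for the upstream subsystem.
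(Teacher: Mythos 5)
Your proposal is correct and follows the paper's proof in all essentials: the same error dynamics $\dot{\tilde{R}}_o=-[w_\Omega-\hat{R}^\top\tilde{b}_\Omega]_\times\tilde{R}_o$, the same $M$-weighted attitude potential with the bias adaptation cancelling the cross term, the same use of Lemma \ref{Lemm:vex_RI} to lower-bound the dissipation by $\underline{\lambda}_{\overline{M}}^2\|\tilde{R}_o\|_{\rm I}$, and the same exclusion of $\mathcal{S}_u$. The ``main obstacle'' you flag is resolved in the paper exactly by your option (i): the Lyapunov function $\mathcal{L}_{o1}$ in \eqref{eq:VTOL_Lyap_Lo1} carries the cross term $-\tfrac{1}{2\delta_{o1}}\boldsymbol{\Upsilon}(\tilde{R}_o)^\top\hat{R}^\top\tilde{b}_\Omega$, and a lower bound on $\delta_{o1}$ keeps both the sandwiching matrices $Q_1,Q_2$ and the dissipation matrix $D_{o1}$ positive definite, giving a strict decay rate in $e_{o1}=[\sqrt{\|\tilde{R}_o\|_{\rm I}},\|\hat{R}^\top\tilde{b}_\Omega\|]^\top$; no linearization is used. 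The two genuine divergences are organizational. First, the paper takes $\sum_{i}s_i\tilde{y}_i=s_T\tilde{P}_o+s_T(\tilde{R}_o-\mathbf{I}_3)p_c$ rather than $\tilde{P}_o$ as the translational error coordinate, which is what makes the correction terms computable directly from measurements and turns $\tilde{P}_o\to 0$ into a one-line corollary via \eqref{eq:VTOL_ye}; your $\tilde{P}_o$-based form is equivalent but hides this. Second, where you invoke a cascade/ISS argument to transfer the upstream exponential rate to $(\tilde{P}_o,\tilde{V}_o)$, the paper instead sums $\mathcal{L}_{o1}+\mathcal{L}_{o2}$ and absorbs the bilinear interconnection $c_{o2}\|e_{o1}\|\,\|e_{o2}\|$ into a single $2\times 2$ quadratic form, which costs an explicit small-gain-type condition $\underline{\lambda}_{D_{o1}}>c_{o2}^2/(4\underline{\lambda}_{D_{o2}})$ but yields the uniform exponential bound $\mathcal{L}_{oT}(t)\leq\mathcal{L}_{oT}(0)\exp(-\underline{\lambda}_{D_o}t/\eta_o)$ in one step; your cascade route avoids that coupling condition but requires you to verify that the perturbations $\varepsilon_P,\varepsilon_V$ are linearly bounded by $\|e_{o1}\|$ (they are, since they are bilinear in the bounded signals $P-p_c$, $V$ and the decaying errors). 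Both routes rely on the standing boundedness of $\sup_t\|P-p_c\|$ and $\sup_t\|V\|$, which the paper also assumes implicitly through $\eta_P$ and $\eta_V$.
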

\begin{proof}From \eqref{eq:VTOL_Rerr}, \eqref{eq:VTOL_Rotation},
	and \eqref{eq:VTOL_Restdot}, one finds
	\begin{align}
		\dot{\tilde{R}}_{o} & =\dot{\hat{R}}^{\top}R+\hat{R}^{\top}\dot{R}\nonumber \\
		& =[\hat{R}^{\top}(\Omega_{m}-\hat{b}_{\Omega})-w_{\Omega}]_{\times}\tilde{R}_{o}-[\hat{R}^{\top}\Omega]_{\times}\tilde{R}_{o}\nonumber \\
		& =-[w_{\Omega}-\hat{R}^{\top}\tilde{b}_{\Omega}]_{\times}\tilde{R}_{o}\label{eq:VTOL_Rerr_dot}
	\end{align}
	Define $||\tilde{R}_{o}||_{{\rm I}}=\frac{1}{4}{\rm Tr}\{\mathbf{I}_{3}-\tilde{R}_{o}\}$
	and $||\tilde{R}_{o}M||_{{\rm I}}=\frac{1}{4}{\rm Tr}\{(\mathbf{I}_{3}-\tilde{R}_{o})M\}$.
	From \eqref{eq:VTOL_Ecul_Dist}, \eqref{eq:VTOL_R_Identity2}, and
	\eqref{eq:VTOL_Rerr_dot}, one finds \cite{hashim2019AtiitudeSurvey}
	\begin{align}
		\frac{d}{dt}||\tilde{R}_{o}||_{{\rm I}}= & -\frac{1}{2}\boldsymbol{\Upsilon}(\tilde{R}_{o})^{\top}(w_{\Omega}-\hat{R}^{\top}\tilde{b}_{\Omega})\label{eq:VTOL_RerrI_dot}\\
		\frac{d}{dt}||\tilde{R}_{o}M||_{{\rm I}}= & -\frac{1}{2}\boldsymbol{\Upsilon}(\tilde{R}_{o}M)^{\top}(w_{\Omega}-\hat{R}^{\top}\tilde{b}_{\Omega})\label{eq:VTOL_RerrMI_dot}
	\end{align}
	Using \eqref{eq:VTOL_Translation}, \eqref{eq:VTOL_Perr}, and \eqref{eq:VTOL_Pestdot},
	one realizes that
	\begin{align*}
		\dot{\tilde{P}}_{o}= & \tilde{V}_{o}-\frac{1}{s_{T}}[w_{\Omega}]_{\times}\sum_{i=1}^{n}s_{i}\tilde{y}_{i}+[w_{\Omega}]_{\times}(\tilde{R}_{o}-\mathbf{I}_{3})p_{c}\\
		& -w_{V}-[\hat{R}^{\top}\tilde{b}_{\Omega}]_{\times}\tilde{R}_{o}P
	\end{align*}
	Using $\dot{\tilde{P}}_{o}$, $w_{V}$ in \eqref{eq:VTOL_Wcorr},
	and \eqref{eq:VTOL_ye}, one has
	\begin{align}
		\sum_{i=1}^{n}s_{i}\dot{\tilde{y}}_{i} & =s_{T}\tilde{V}_{o}-s_{T}k_{o2}\sum_{i=1}^{n}s_{i}\tilde{y}_{i}-s_{T}[\hat{R}^{\top}\tilde{b}_{\Omega}]_{\times}\tilde{R}_{o}(P-p_{c})\label{eq:VTOL_yerr_dot}
	\end{align}
	From \eqref{eq:VTOL_Translation}, \eqref{eq:VTOL_Verr}, \eqref{eq:VTOL_Vestdot},
	and using $w_{a}$ in \eqref{eq:VTOL_Wcorr}, one finds that
	\begin{align}
		\dot{\tilde{V}}_{o}= & -[w_{\Omega}]_{\times}\tilde{V}_{o}-k_{o3}\sum_{i=1}^{n}s_{i}\tilde{y}_{i}-[\hat{R}^{\top}\tilde{b}_{\Omega}]_{\times}\tilde{R}_{o}V\nonumber \\
		& +g(\mathbf{I}_{3}-\tilde{R})e_{3}\label{eq:VTOL_Verr_dot}
	\end{align}
	From \eqref{eq:VTOL_Wcorr} and \eqref{eq:VTOL_Rerr_dot}, one is
	able to show that $\boldsymbol{\Upsilon}(\dot{\tilde{R}}_{o})=-\frac{1}{2}\Psi(\tilde{R}_{o})(w_{\Omega}-\hat{R}^{\top}\tilde{b}_{\Omega})$
	such that \cite{hashim2019AtiitudeSurvey}
	\begin{align}
		\boldsymbol{\Upsilon}(\dot{\tilde{R}}_{o}) & =-\frac{1}{2}\Psi(\tilde{R}_{o})(k_{o1}\boldsymbol{\Upsilon}(\tilde{R}_{o}M)-\hat{R}^{\top}\tilde{b}_{\Omega})\label{eq:VTOL_vex_dot}
	\end{align}
	where $\Psi(\tilde{R}_{o})={\rm Tr}\{\tilde{R}_{o}\}\mathbf{I}_{3}-\tilde{R}_{o}$.
	In view of Lemma \ref{Lemm:vex_RI}, \eqref{eq:VTOL_vex_dot}, and
	\eqref{eq:VTOL_Restdot}, one finds
	\begin{align}
		& -\frac{1}{2\delta_{o1}}\frac{d}{dt}\boldsymbol{\Upsilon}(\tilde{R}_{o})^{\top}\hat{R}^{\top}\tilde{b}_{\Omega}\leq-\frac{1}{2\delta_{o1}}||\hat{R}^{\top}\tilde{b}_{\Omega}||^{2}\nonumber \\
		& \hspace{7em}+\frac{c_{o1}}{\delta_{o1}}||\hat{R}^{\top}\tilde{b}_{\Omega}||\sqrt{||\tilde{R}_{o}||_{{\rm I}}}+\frac{c_{o1}}{\delta_{o1}}||\tilde{R}_{o}||_{{\rm I}}\label{eq:VTOL_R_LyapC1_Aux}
	\end{align}
	where $\dot{\tilde{b}}_{\Omega}=-\dot{\hat{b}}_{\Omega}=-\gamma_{o}\hat{R}\boldsymbol{\Upsilon}(\tilde{R}_{o}M)$,
	$\eta_{\Omega}=\sup_{t\geq0}||\Omega||$, and $\text{\ensuremath{c_{o1}}=\ensuremath{\max}\{2(\ensuremath{\eta_{b}}+\ensuremath{\sqrt{3}k_{o1}}+\ensuremath{\overline{\lambda}_{M}})+\ensuremath{\eta_{\Omega}},2\ensuremath{\gamma_{o}\overline{\lambda}_{M}}\}}$.
	Define the following Lyapunov function candidate $\mathcal{L}_{o1}:\mathbb{SO}\left(3\right)\times\mathbb{R}^{3}\rightarrow\mathbb{R}_{+}$:
	\begin{equation}
		\mathcal{L}_{o1}=\frac{1}{2}{\rm Tr}\{(\mathbf{I}_{3}-\tilde{R}_{o})M\}+\frac{1}{2\gamma_{o}}\tilde{b}_{\Omega}^{\top}\tilde{b}_{\Omega}-\frac{1}{2\delta_{o1}}\boldsymbol{\Upsilon}(\tilde{R}_{o})^{\top}\hat{R}^{\top}\tilde{b}_{\Omega}\label{eq:VTOL_Lyap_Lo1}
	\end{equation}
	In view of \eqref{eq:VTOL_lemm_vexRIM}, one has
	\[
	e_{o1}^{\top}\underbrace{\left[\begin{array}{cc}
			\underline{\lambda}_{\overline{M}} & -\frac{1}{2\delta_{o1}}\\
			-\frac{1}{2\delta_{o1}} & \frac{1}{2\gamma_{o}}
		\end{array}\right]}_{Q_{1}}e_{o1}\leq\mathcal{L}_{o1}\leq e_{o1}^{\top}\underbrace{\left[\begin{array}{cc}
			\overline{\lambda}_{\overline{M}} & \frac{1}{2\delta_{o1}}\\
			\frac{1}{2\delta_{o1}} & \frac{1}{2\gamma_{o}}
		\end{array}\right]}_{Q_{2}}e_{o1}
	\]
	where $e_{o1}=[\sqrt{||\tilde{R}_{o}||_{{\rm I}}},||\hat{R}^{\top}\tilde{b}_{\Omega}||]^{\top}$.
	It becomes evident that $Q_{1}$ and $Q_{2}$ can be made positive
	by selecting $\delta_{o1}>\sqrt{\frac{\gamma_{o}}{2\underline{\lambda}_{\overline{M}}}}$.
	From \eqref{eq:VTOL_Omestdot} and \eqref{eq:VTOL_RerrMI_dot}, one
	obtains
	\begin{align}
		\dot{C}_{o} & =\frac{1}{2}\frac{d}{dt}{\rm Tr}\{(\mathbf{I}_{3}-\tilde{R}_{o})M\}+\frac{1}{2\gamma_{o}}\frac{d}{dt}\tilde{b}_{\Omega}^{\top}\tilde{b}_{\Omega}\nonumber \\
		& =-\boldsymbol{\Upsilon}(\tilde{R}_{o}M)^{\top}(w_{\Omega}-\hat{R}^{\top}\tilde{b}_{\Omega})-\frac{1}{\gamma_{o}}\tilde{b}_{\Omega}^{\top}\dot{\hat{b}}_{\Omega}\nonumber \\
		& =-k_{o1}||\boldsymbol{\Upsilon}(\tilde{R}_{o}M)||^{2}\nonumber \\
		& \leq-k_{o1}\underline{\lambda}_{\overline{M}}^{2}||\tilde{R}_{o}||_{{\rm I}}\label{eq:VTOL_Lyap_Lo1dot-1}
	\end{align}
	by employing \eqref{eq:VTOL_lemm_vexRIM} in Lemma \ref{Lemm:vex_RI}.
	The result in \eqref{eq:VTOL_Lyap_Lo1dot-1} reveals that $\dot{C}_{o}$
	is negative and continuous, $\dot{C}_{o}\rightarrow0$ such that $C_{o}\in\mathcal{L}_{\infty}$,
	and a finite $\lim_{t\rightarrow\infty}C_{o}$ exists. Hence, $\tilde{R}_{o}$
	and $\tilde{b}_{\Omega}$ are globally bounded. Therefore, $\ddot{C}_{o}$
	is bounded, and $\lim_{t\rightarrow\infty}k_{o1}||\boldsymbol{\Upsilon}(\tilde{R}_{o}M)||=0_{3\times1}$
	indicates that $\lim_{t\rightarrow\infty}\tilde{R}_{o}=\mathbf{I}_{3}$.
	The boundedness of $\tilde{b}_{\Omega}$ implies that $\ddot{\tilde{R}}_{o}$
	is bounded, and on the basis of Barbalat Lemma \ref{lem:Barbalat},
	$\lim_{t\rightarrow\infty}\dot{\tilde{R}}_{o}=0_{3\times3}$. Since
	$\lim_{t\rightarrow\infty}||\boldsymbol{\Upsilon}(\tilde{R}_{o}M)||=0_{3\times1}$,
	it becomes apparent that $\lim_{t\rightarrow\infty}w_{\Omega}=0_{3\times1}$.
	Since $\lim_{t\rightarrow\infty}\dot{\tilde{R}}_{o}=0_{3\times3}$,
	$\lim_{t\rightarrow\infty}-[w_{\Omega}-\hat{R}^{\top}\tilde{b}_{\Omega}]_{\times}\tilde{R}_{o}=0_{3\times3}$
	showing that $\lim_{t\rightarrow\infty}\tilde{b}_{\Omega}=0_{3\times1}$
	and, in turn, $\lim_{t\rightarrow\infty}C_{o}=0$ starting from almost
	any initial condition. Using \eqref{eq:VTOL_R_LyapC1_Aux}, the derivative
	of \eqref{eq:VTOL_Lyap_Lo1} is:
	\begin{align*}
		\dot{\mathcal{L}}_{o1}\leq & -(k_{o1}\underline{\lambda}_{\overline{M}}^{2}-\frac{c_{o1}}{\delta_{o1}})||\tilde{R}_{o}||_{{\rm I}}-\frac{1}{2\delta_{o1}}||\hat{R}^{\top}\tilde{b}_{\Omega}||^{2}\\
		& +\frac{c_{o1}}{\delta_{o1}}||\hat{R}^{\top}\tilde{b}_{\Omega}||\sqrt{||\tilde{R}_{o}||_{{\rm I}}}
	\end{align*}
	such that
	\begin{align}
		\dot{\mathcal{L}}_{o1} & \leq-\frac{1}{2\delta_{o1}}e_{o1}^{\top}\underbrace{\left[\begin{array}{cc}
				2k_{o1}\delta_{o1}\underline{\lambda}_{\overline{M}}^{2}-2c_{o1} & c_{o1}\\
				c_{o1} & 1
			\end{array}\right]}_{D_{o1}}e_{o1}\label{eq:VTOL_Lyap_Lo1dot}
	\end{align}
	$D_{o1}$ is made positive by setting $\delta_{o1}>\frac{c_{o1}^{2}+2c_{o1}}{2k_{o1}\underline{\lambda}_{\overline{M}}^{2}}$.
	Thereby, by selecting $\delta_{o1}>\max\{\sqrt{\frac{\gamma_{o}}{2\underline{\lambda}_{\overline{M}}}},\frac{c_{o1}^{2}+2c_{o1}}{2k_{o1}\underline{\lambda}_{\overline{M}}^{2}}\}$
	and letting $\underline{\lambda}_{D_{o1}}$ be the minimum eigenvalue
	of $D_{o1}$, one finds
	\begin{align}
		\dot{\mathcal{L}}_{o1} & \leq-\underline{\lambda}_{D_{o1}}(||\tilde{R}_{o}||_{{\rm I}}+||\hat{R}^{\top}\tilde{b}_{\Omega}||^{2})\leq-\frac{\underline{\lambda}_{D_{o1}}}{\eta_{o}}\mathcal{L}_{o1}\label{eq:VTOL_Lyap_Lo1Final}
	\end{align}
	where $\eta_{o}=\max\{\overline{\lambda}(Q_{1}),\overline{\lambda}(Q_{2})\}$.
	From \eqref{eq:VTOL_yerr_dot} and \eqref{eq:VTOL_Verr_dot}, define
	the Lyapunov function candidate $\mathcal{L}_{o2}:\mathbb{R}^{3}\times\mathbb{R}^{3}\rightarrow\mathbb{R}_{+}$
	as
	\begin{equation}
		\mathcal{L}_{o2}=||\sum_{i=1}^{n}s_{i}\tilde{y}_{i}||^{2}+\frac{1}{2k_{o3}}\tilde{V}_{o}^{\top}\tilde{V}_{o}-\delta_{o2}\tilde{V}_{o}^{\top}\sum_{i=1}^{n}s_{i}\tilde{y}_{i}\label{eq:VTOL_Lyap_Lo2}
	\end{equation}
	One can show that
	\[
	e_{o2}^{\top}\underbrace{\left[\begin{array}{cc}
			\frac{1}{2} & -\frac{\delta_{o2}}{2}\\
			-\frac{\delta_{o2}}{2} & \frac{1}{2k_{o3}}
		\end{array}\right]}_{Q_{3}}e_{o2}\leq\mathcal{L}_{o2}\leq e_{o2}^{\top}\underbrace{\left[\begin{array}{cc}
			\frac{1}{2} & \frac{\delta_{o2}}{2}\\
			\frac{\delta_{o2}}{2} & \frac{1}{2k_{o3}}
		\end{array}\right]}_{Q_{4}}e_{o2}
	\]
	where $e_{o2}=[||\sum_{i=1}^{n}s_{i}\tilde{y}_{i}||,||\tilde{V}_{o}||]^{\top}$.
	One finds that $Q_{3}$ and $Q_{4}$ are made positive by setting
	$\delta_{o2}<\frac{1}{\sqrt{k_{o3}}}$. From \eqref{eq:VTOL_yerr_dot}
	and \eqref{eq:VTOL_Verr_dot}, one finds
	\begin{align}
		& \dot{\mathcal{L}}_{o2}\leq-e_{o2}^{\top}\underbrace{\left[\begin{array}{cc}
				s_{T}k_{o2}-\delta_{o2}k_{o3} & \frac{\delta_{o2}(s_{T}k_{o2}+2k_{o1}\overline{\lambda}_{M})}{2}\\
				\frac{\delta_{o2}(s_{T}k_{o2}+2k_{o1}\overline{\lambda}_{M})}{2} & s_{T}\delta_{o2}
			\end{array}\right]}_{D_{o2}}e_{o2}\nonumber \\
		& \hspace{1em}+c_{o2}(||\tilde{V}_{o}||+||\sum_{i=1}^{n}s_{i}\tilde{y}_{i}||)(||\hat{R}^{\top}\tilde{b}_{\Omega}||+\sqrt{||\tilde{R}_{o}||_{{\rm I}}})\label{eq:VTOL_LyapL2_dot}
	\end{align}
	where $\eta_{P}=\sup_{t\geq0}||P-p_{c}||$, $\eta_{V}=\sup_{t\geq0}||V||$,
	and $c_{o2}=\max\{s_{T}\eta_{P}\delta_{o2}+\frac{1}{k_{o3}},\eta_{V}\delta_{o2}+s_{T}\eta_{P},\frac{g}{k_{o3}},\delta_{o2}g\}$.
	It should be noted that $||\mathbf{I}_{3}-\tilde{R}_{o}||_{F}=2\sqrt{2}\sqrt{||\tilde{R}_{o}||_{{\rm I}}}$
	\cite{hashim2019AtiitudeSurvey}. It is evident that $D_{o2}$ is
	positive if $\delta_{o2}<\frac{4s_{T}^{2}k_{o2}}{4s_{T}k_{o3}+(s_{T}k_{o2}+2k_{o1}\overline{\lambda}_{M})^{2}}$.
	As such, let us select $\delta_{o2}<\min\{\frac{1}{\sqrt{k_{o3}}},\frac{4s_{T}^{2}k_{o2}}{4s_{T}k_{o3}+(s_{T}k_{o2}+2k_{o1}\overline{\lambda}_{M})^{2}}\}$,
	and let $\underline{\lambda}_{D_{o2}}$ be the minimum eigenvalue
	of $D_{o2}$. One can show that
	\begin{equation}
		\dot{\mathcal{L}}_{o2}\leq-\underline{\lambda}_{D_{o2}}||e_{o2}||^{2}+c_{o2}||e_{o1}||\,||e_{o2}||\label{eq:VTOL_Lyap_Lo2Final}
	\end{equation}
	From \eqref{eq:VTOL_Lyap_Lo1} and \eqref{eq:VTOL_Lyap_Lo2}, define
	the following Lyapunov function candidate $\mathcal{L}_{oT}:\mathbb{SO}\left(3\right)\times\mathbb{R}^{3}\times\mathbb{R}^{3}\times\mathbb{R}^{3}\rightarrow\mathbb{R}_{+}$:
	\begin{equation}
		\mathcal{L}_{oT}=\mathcal{L}_{o1}+\mathcal{L}_{o2}\label{eq:VTOL_LyapLo_Total}
	\end{equation}
	From \eqref{eq:VTOL_Lyap_Lo1Final} and \eqref{eq:VTOL_LyapL2_dot},
	one obtains
	\[
	\dot{\mathcal{L}}_{oT}\leq-\underline{\lambda}_{D_{o1}}||e_{o1}||^{2}-\underline{\lambda}_{D_{o2}}||e_{o2}||^{2}+c_{o2}||e_{o1}||\,||e_{o2}||
	\]
	\begin{equation}
		\dot{\mathcal{L}}_{oT}\leq-{\bf e}_{o}^{\top}\underbrace{\left[\begin{array}{cc}
				\underline{\lambda}_{D_{o1}} & \frac{1}{2}c_{o2}\\
				\frac{1}{2}c_{o2} & \underline{\lambda}_{D_{o2}}
			\end{array}\right]}_{D_{o}}{\bf e}_{o}\label{eq:VTOL_LyapLodot_Total}
	\end{equation}
	with ${\bf e}_{o}=[||e_{o1}||,||e_{o2}||]^{\top}$ and $D_{o}$ being
	positive if $\underline{\lambda}_{D_{o1}}>\frac{1}{4\underline{\lambda}_{D_{o2}}}c_{o2}^{2}$.
	Let $\underline{\lambda}_{D_{o}}$ be the minimum eigenvalue of $D_{o}$
	and let $\eta_{o}=\max\{\overline{\lambda}(Q_{1}),\overline{\lambda}(Q_{2}),\overline{\lambda}(Q_{3}),\overline{\lambda}(Q_{4})\}$.
	It becomes apparent that
	\begin{equation}
		\mathcal{L}_{oT}(t)\leq\mathcal{L}_{oT}(0)\exp(-\underline{\lambda}_{D_{o}}t/\eta_{o})\label{eq:VTOL_Lo_Total-2}
	\end{equation}
	As such, $\lim_{t\rightarrow\infty}\tilde{R}_{o}=\mathbf{I}_{3}$
	and $\lim_{t\rightarrow\infty}||\tilde{b}_{\Omega}||=\lim_{t\rightarrow\infty}||\sum_{i=1}^{n}s_{i}\tilde{y}_{i}||=\lim_{t\rightarrow\infty}||\tilde{V}_{o}||=0$
	exponentially. Thus, $||\sum_{i=1}^{n}s_{i}\tilde{y}_{i}||\rightarrow0$
	implies that $||\tilde{P}_{o}||\rightarrow0$ exponentially, see \eqref{eq:VTOL_ye}.
	Therefore, the closed-loop error signals are almost globally exponentially
	stable proving Theorem \ref{thm:Theorem1}.\end{proof}

\section{Direct Observer-based Controller Design \label{sec:VTOL_Controller}}

Let $R_{d}\in\mathbb{SO}\left(3\right)$, $\Omega_{d}\in\mathbb{R}^{3}$,
$P_{d}\in\mathbb{R}^{3}$, and $V_{d}\in\mathbb{R}^{3}$ denote the
desired vehicle's orientation, angular velocity, position, and linear
velocity, respectively. This Section aims to design an observer-based
controller for a VTOL-UAV that tracks the true VTOL-UAV motion components
($R$, $\Omega$, $P$, and $V$) along the desired trajectory ($R_{d}$,
$\Omega_{d}$, $P_{d}$, and $V_{d}$) given the information estimated
($\hat{R}$, $\hat{\Omega}$, $\hat{P}$, and $\hat{V}$) by the novel
direct observer as described in the previous Section along with the
set of feature observation and measurement in \eqref{eq:VTOL_VRP}
and the control inputs $\mathcal{T}\in\mathbb{R}^{3}$ and $\Im\in\mathbb{R}$
defined in \eqref{eq:VTOL_Rotation} and \eqref{eq:VTOL_Translation}.
Let us define the following error components:
\begin{align}
	\tilde{R}_{c}= & R_{d}^{\top}R\label{eq:VTOL_Rerr-c}\\
	\tilde{\Omega}_{c}= & R_{d}^{\top}(\Omega_{d}-\Omega)\label{eq:VTOL_Omerr-c}\\
	\tilde{P}_{c}= & P-P_{d}\label{eq:VTOL_Perr-c}\\
	\tilde{V}_{c}= & V-V_{d}\label{eq:VTOL_Verr-c}
\end{align}
The objective of the proposed control laws is to drive $R\rightarrow R_{d}$,
$\Omega\rightarrow\Omega_{d}$, $P\rightarrow P_{d}$, and $V\rightarrow V_{d}$
such that $\lim_{t\rightarrow\infty}\tilde{R}_{c}=\mathbf{I}_{3}$
and $\lim_{t\rightarrow\infty}\tilde{\Omega}_{c}=\lim_{t\rightarrow\infty}\tilde{P}_{c}=\lim_{t\rightarrow\infty}\tilde{V}_{c}=0_{3\times1}$.
Recalling that $s_{T}=\sum_{i=1}^{n}s_{i}$, one finds
\begin{align}
	\tilde{R}_{c}M & =R_{d}^{\top}RM=R_{d}^{\top}\hat{R}\hat{R}^{\top}RM=R_{d}^{\top}\hat{R}\tilde{R}_{o}M\nonumber \\
	& =R_{d}^{\top}\sum_{i=1}^{n}s_{i}y_{i}(p_{i}-p_{c})^{\top}\label{eq:VTOL_RdM}
\end{align}
where $\tilde{R}_{o}M=\sum_{i=1}^{n}s_{i}\hat{R}^{\top}y_{i}(p_{i}-p_{c})^{\top}$
as defined in \eqref{eq:VTOL_Measurements}. Thereby, one shows
\begin{align}
	& \boldsymbol{\mathcal{P}}_{a}(\tilde{R}_{c}M)=\boldsymbol{\mathcal{P}}_{a}(\sum_{i=1}^{n}s_{i}R_{d}^{\top}y_{i}(p_{i}-p_{c})^{\top})\nonumber \\
	& \hspace{1.5em}=\sum_{i=1}^{n}\frac{s_{i}}{2}\left(R_{d}^{\top}y_{i}(p_{i}-p_{c})^{\top}-(p_{i}-p_{c})y_{i}^{\top}R_{d}\right)\label{eq:VTOL_Pa_RM-1}
\end{align}
In view of \eqref{eq:VTOL_VEX}, $\boldsymbol{\Upsilon}(\tilde{R}_{c}M)=\mathbf{vex}(\boldsymbol{\mathcal{P}}_{a}(\tilde{R}_{c}M))$.
Therefore, one has
\begin{align}
	\boldsymbol{\Upsilon}(\tilde{R}_{c}M) & =\sum_{i=1}^{n}\frac{s_{i}}{2}\left((p_{i}-p_{c})\times R_{d}^{\top}y_{i}\right)\label{eq:VTOL_VEX_RM-1}
\end{align}
In view of the true rotational dynamics in \eqref{eq:VTOL_Rotation},
the desired attitude dynamics are
\begin{equation}
	\dot{R}_{d}=-[\Omega_{d}]_{\times}R_{d}\label{eq:VTOL_Rd_dot}
\end{equation}
Recall that $\dot{V}=ge_{3}-\frac{\Im}{m}R^{\top}e_{3}$ in \eqref{eq:VTOL_Translation}
and re-express it $\dot{V}=ge_{3}-\frac{\Im}{m}R_{d}^{\top}e_{3}-\frac{\Im}{m}(R^{\top}-R_{d}^{\top})e_{3}=F-\frac{\Im}{m}(R^{\top}-R_{d}^{\top})e_{3}$
with $F$ standing for an intermediary control input defined by
\begin{equation}
	F=ge_{3}-\frac{\Im}{m}R_{d}^{\top}e_{3}=[f_{1},f_{2},f_{3}]^{\top}\in\mathbb{R}^{3}\label{eq:VTOL_F_1}
\end{equation}
with $F=[f_{1},f_{2},f_{3}]^{\top}\in\mathbb{R}^{3}$ denoting an
intermediary control input to the translational dynamics described
in \eqref{eq:VTOL_Translation} which shows that $\Im=m||ge_{3}-F||$.
Thus, the desired attitude and angular velocity will be obtained with
the aid of $F$ which allows for an explicit extraction of the desired
angular velocity $\Omega_{d}$ and its rate of change $\dot{\Omega}_{d}$.

\begin{assum}\label{Assum:P_desired}The desired position is upper-bounded,
	and its first $\dot{P}_{d}=V_{d}$, second $\ddot{P}_{d}$, third
	$P_{d}^{(3)}$, and fourth $P_{d}^{(4)}$ time-derivatives are upper-bounded.
	Also, $\Omega_{d}$ and $\dot{\Omega}_{d}$ are upper-bounded by a
	scalar $\gamma_{d}<\infty$ such that $\gamma_{d}\geq\max\{\sup_{t\geq0}||\Omega_{d}||,\sup_{t\geq0}||\dot{\Omega}_{d}||\}$.\end{assum}
\begin{lem}
	\label{lem:Lemma_Qd}\cite{Roberts2009} Consider the linear velocity
	dynamics in \eqref{eq:VTOL_Translation}, and let $F=[f_{1},f_{2},f_{3}]^{\top}\in\mathbb{R}^{3}$.
	The thrust magnitude is $\Im=m||ge_{3}-F||$, and the desired components
	of the unit-quaternion $Q_{d}=[q_{d0},q_{d}^{\top}]^{\top}\in\mathbb{S}^{3}$
	are as follows:
	\begin{equation}
		q_{d0}=\sqrt{\frac{m}{2\Im}(g-f_{3})+\frac{1}{2}},\hspace{1em}q_{d}=\left[\begin{array}{c}
			\frac{m}{2\Im q_{d0}}f_{2}\\
			-\frac{m}{2\Im q_{d0}}f_{1}\\
			0
		\end{array}\right]\label{eq:VTOL_Q_Qd}
	\end{equation}
	given that $F\neq[0,0,c]^{\top}$ for $c\geq g$ and $\mathbb{S}^{3}=\{Q_{d}\in\mathbb{R}^{4}|\,||Q_{d}||=1\}$.
	Assume $F$ is differentiable. Consequently, the desired angular velocity
	$\Omega_{d}$ is
	\begin{align}
		\Omega_{d} & =\Xi(F)\dot{F}\label{eq:VTOL_Q_Omd}
	\end{align}
	with
	\begin{equation}
		\Xi(F)=\frac{1}{\alpha_{1}^{2}\alpha_{2}}\left[\begin{array}{ccc}
			-f_{1}f_{2} & -f_{2}^{2}+\alpha_{1}\alpha_{2} & f_{2}\alpha_{2}\\
			f_{1}^{2}-\alpha_{1}\alpha_{2} & f_{1}f_{2} & -f_{1}\alpha_{2}\\
			f_{2}\alpha_{1} & -f_{1}\alpha_{1} & 0
		\end{array}\right]\label{eq:VTOL_Q_Lambda}
	\end{equation}
	such that $\alpha_{1}=||ge_{3}-F||$ and $\alpha_{2}=||ge_{3}-F||+g-f_{3}$.
\end{lem}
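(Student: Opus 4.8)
The plan is to use the fact that the intermediary input $F$ fixes the desired thrust direction, which pins down two of the three rotational degrees of freedom of $R_{d}$, while the remaining yaw freedom is removed by forcing the third component of the desired unit-quaternion to vanish. First I would read \eqref{eq:VTOL_F_1} as $F=ge_{3}-\frac{\Im}{m}R_{d}^{\top}e_{3}$ where $R_{d}^{\top}e_{3}$ is a \emph{unit} vector; taking norms immediately yields $\alpha_{1}=||ge_{3}-F||=\Im/m$, i.e. the thrust law $\Im=m\alpha_{1}$, and hence
\[
n:=R_{d}^{\top}e_{3}=\frac{m}{\Im}(ge_{3}-F)=\frac{1}{\alpha_{1}}[-f_{1},\,-f_{2},\,g-f_{3}]^{\top}.
\]
This supplies two independent scalar constraints on $R_{d}$. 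Before solving, I would record that the excluded configuration $F=[0,0,c]^{\top}$, $c\geq g$, is exactly the case $g-f_{3}=-\alpha_{1}$, for which $q_{d0}=0$ and $\alpha_{2}=\alpha_{1}+g-f_{3}=0$; ruling it out guarantees $q_{d0}>0$ and $\alpha_{2}>0$, so every expression in \eqref{eq:VTOL_Q_Qd}--\eqref{eq:VTOL_Q_Lambda} is well defined.

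Next I would substitute the quaternion-to-rotation map (in the convention consistent with \eqref{eq:VTOL_Rd_dot}) into $R_{d}^{\top}e_{3}$ and impose the zero-yaw choice $q_{d3}=0$. This reduces the third column of $R_{d}^{\top}$ to $[2q_{d0}q_{d2},\,-2q_{d0}q_{d1},\,2q_{d0}^{2}-1]^{\top}$; matching it to $n$ gives $2q_{d0}^{2}-1=(g-f_{3})/\alpha_{1}$, hence $q_{d0}^{2}=\frac{1}{2}+\frac{m}{2\Im}(g-f_{3})=\frac{\alpha_{2}}{2\alpha_{1}}$, which is the displayed formula for $q_{d0}$. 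Back-substituting the first two matched components recovers $q_{d}=\frac{m}{2\Im q_{d0}}[f_{2},\,-f_{1},\,0]^{\top}$, and a direct check gives $q_{d0}^{2}+||q_{d}||^{2}=1$, so $Q_{d}\in\mathbb{S}^{3}$. This settles the first part of the Lemma.

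For the angular velocity I would differentiate. Writing the desired kinematics \eqref{eq:VTOL_Rd_dot} in unit-quaternion form fixes a $4\times3$ rate map $H(Q_{d})$ with $\dot{Q}_{d}=\frac{1}{2}H(Q_{d})\Omega_{d}$; since $H(Q_{d})^{\top}H(Q_{d})=\mathbf{I}_{3}$ on $\mathbb{S}^{3}$, one inverts to $\Omega_{d}=2H(Q_{d})^{\top}\dot{Q}_{d}$. The closed form above makes $Q_{d}$ a differentiable function of $F$ off the excluded set, so the chain rule gives $\dot{Q}_{d}=\frac{\partial Q_{d}}{\partial F}\dot{F}$, where the Jacobian follows from differentiating the $q_{d0},q_{d}$ formulas together with $\dot{\alpha}_{1}=-(ge_{3}-F)^{\top}\dot{F}/\alpha_{1}$. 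Composing the two maps yields $\Omega_{d}=2H(Q_{d})^{\top}\frac{\partial Q_{d}}{\partial F}\dot{F}$, which must reduce to $\Xi(F)\dot{F}$, thereby identifying $\Xi(F)$ with \eqref{eq:VTOL_Q_Lambda} and proving \eqref{eq:VTOL_Q_Omd}.

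The main obstacle is this last composition: collapsing the product $2H(Q_{d})^{\top}\,\partial Q_{d}/\partial F$ into the compact $3\times3$ matrix $\Xi(F)$ is algebraically heavy and sign-sensitive. The denominator $\alpha_{1}^{2}\alpha_{2}$ has to emerge from the normalization factor $1/\alpha_{1}$ in $n$ together with the $q_{d0}$-dependence through $\alpha_{2}=2\alpha_{1}q_{d0}^{2}$, and the $\dot{R}_{d}=-[\Omega_{d}]_{\times}R_{d}$ convention of \eqref{eq:VTOL_Rd_dot} must be tracked consistently to reproduce the exact entries and signs of $\Xi(F)$; I would also verify that no cancellation silently reintroduces the excluded singularity. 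An equivalent route that may shorten the bookkeeping is to differentiate the constraint $R_{d}^{\top}e_{3}=n$ directly, using $\dot{R}_{d}^{\top}=R_{d}^{\top}[\Omega_{d}]_{\times}$ to obtain $\Omega_{d}\times e_{3}=R_{d}\dot{n}$ for the in-plane components and the differentiated yaw constraint $\dot{q}_{d3}=0$ for the third component.
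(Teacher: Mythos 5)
The paper does not prove this lemma: it is imported verbatim by citation from \cite{Roberts2009}, so there is no in-paper argument to compare against. Judged on its own terms, the first half of your reconstruction is correct and essentially complete. Reading \eqref{eq:VTOL_F_1} as $ge_{3}-F=\frac{\Im}{m}R_{d}^{\top}e_{3}$ and taking norms does give $\Im=m\alpha_{1}$; with the paper's quaternion-to-rotation map and the zero third component $q_{d3}=0$, the third row of $R_{d}$ is indeed $[2q_{d0}q_{d2},\,-2q_{d0}q_{d1},\,2q_{d0}^{2}-1]$, and matching it to $\frac{1}{\alpha_{1}}[-f_{1},-f_{2},g-f_{3}]^{\top}$ reproduces \eqref{eq:VTOL_Q_Qd} exactly; the identity $q_{d0}^{2}=\alpha_{2}/(2\alpha_{1})$ together with $f_{1}^{2}+f_{2}^{2}=\alpha_{1}^{2}-(g-f_{3})^{2}$ confirms $||Q_{d}||=1$, and your identification of the excluded set $F=[0,0,c]^{\top}$, $c\geq g$ with $q_{d0}=\alpha_{2}=0$ is also right.

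The gap is in the second half. You correctly set up $\Omega_{d}=2H(Q_{d})^{\top}\frac{\partial Q_{d}}{\partial F}\dot{F}$, but you then stop at the point where all of the content of \eqref{eq:VTOL_Q_Lambda} actually lives: the explicit evaluation of that product and its reduction to the stated $3\times3$ matrix with denominator $\alpha_{1}^{2}\alpha_{2}$. As written, the claim that the composition ``must reduce to $\Xi(F)\dot{F}$'' is an assertion, not a verification --- and this is exactly the sign- and convention-sensitive step you yourself flag (the paper's kinematics $\dot{R}_{d}=-[\Omega_{d}]_{\times}R_{d}$, with $R_{d}$ mapping inertial to body, fixes which of the two possible sign conventions for $H(Q_{d})$ is in force, and an error there flips entries of $\Xi$). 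To close the lemma you would need either to carry out that differentiation and matrix product explicitly, or to follow your alternative route of differentiating $R_{d}^{\top}e_{3}=\frac{1}{\alpha_{1}}(ge_{3}-F)$ together with the yaw constraint and solving the resulting linear system for $\Omega_{d}$; either way the computation must be done, since \eqref{eq:VTOL_Q_Lambda} cannot be checked by inspection.
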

Lemma \ref{lem:Lemma_Qd} implies that $\Im$ and $Q_{d}$ in \eqref{eq:VTOL_Q_Qd}
are singularity-free. In view of Lemma \ref{lem:Lemma_Qd}, the desired
orientation $R_{d}$ is as follows \cite{hashim2019AtiitudeSurvey,shuster1993survey}:
\[
R_{d}=(q_{d0}^{2}-||q_{d}||^{2})\mathbf{I}_{3}+2q_{d}q_{d}^{\top}-2q_{d0}[q_{d}]_{\times}\in\mathbb{SO}\left(3\right)
\]

\begin{rem}
	$F$ is designed to be twice differentiable in order for the desired
	rate of change of angular velocity $\dot{\Omega}_{d}$ to be as follows:
	\begin{equation}
		\dot{\Omega}_{d}=\dot{\Xi}(F)\dot{F}+\Xi(F)\ddot{F}\label{eq:VTOL_Q_Omd_dot-1}
	\end{equation}
	The derivation of $\dot{F}$ and $\ddot{F}$ are subsequently provided.
\end{rem}
Consider introducing the following variables:
\begin{equation}
	\mathcal{E}=\tilde{P}_{c}-\theta,\hspace{2em}\dot{\mathcal{E}}=\tilde{V}_{c}-\dot{\theta}\label{eq:VTOL_Q_E}
\end{equation}
with $\theta\in\mathbb{R}^{3}$ describing an adaptively tuned auxiliary
variable. The adaptation mechanism of $\theta$ and the intermediary
control input ($F$) are designed as follows:
\begin{equation}
	\begin{cases}
		\ddot{\theta} & =-k_{\theta1}\tanh(\theta)-k_{\theta2}\tanh(\dot{\theta})\\
		& \hspace{1em}+k_{c3}(\hat{P}-P_{d}-\theta)+k_{c4}(\hat{V}-V_{d}-\dot{\theta})\\
		F & =\ddot{P}_{d}-k_{\theta1}\tanh(\theta)-k_{\theta2}\tanh(\dot{\theta})
	\end{cases}\label{eq:VTOL_F_theta}
\end{equation}
where $k_{\theta1}$, $k_{\theta2}$, $k_{c3}$, and $k_{c4}$ denote
positive constants. $F$ is selected as in \cite{abdessameud2010global}.
In view of \eqref{eq:VTOL_F_theta}, the first and the second derivatives
of $F$ are
\begin{equation}
	\begin{cases}
		\dot{F} & =P_{d}^{(3)}-k_{\theta1}H\dot{\theta}-k_{\theta2}\dot{H}\ddot{\theta}\\
		\ddot{F} & =P_{d}^{(4)}-k_{\theta1}Z-k_{\theta2}\dot{Z}
	\end{cases}\label{eq:VTOL_Fdot}
\end{equation}
where $\hbar(\theta_{i})=1-\tanh^{2}(\theta_{i})$, $\hbar(\dot{\theta}_{i})=1-\tanh^{2}(\dot{\theta}_{i})$,
$z_{i}=z(\theta_{i},\dot{\theta}_{i},\ddot{\theta}_{i})=\hbar(\theta_{i})(\ddot{\theta}_{i}-2\tanh(\theta_{i})\dot{\theta}_{i}^{2})$,
and $\dot{z}_{i}=z(\dot{\theta}_{i},\ddot{\theta}_{i},\theta_{i}^{(3)})=\hbar(\dot{\theta}_{i})(\theta_{i}^{(3)}-2\tanh(\dot{\theta}_{i})\ddot{\theta}_{i}^{2})$
for all $i=1,2,3$ such that $H={\rm diag}(\hbar(\theta_{1}),\hbar(\theta_{2}),\hbar(\theta_{3}))$,
$\dot{H}={\rm diag}(\hbar(\dot{\theta}_{1}),\hbar(\dot{\theta}_{2}),\hbar(\dot{\theta}_{3}))$,
$Z=[z_{1},z_{2},z_{3}]^{\top}$, and $\dot{Z}=[\dot{z}_{1},\dot{z}_{2},\dot{z}_{3}]^{\top}$.
Also, the third derivative $\beta^{(3)}$ is equivalent to
\[
\theta^{(3)}=-k_{\beta1}H\dot{\theta}-k_{\theta2}\dot{H}\ddot{\theta}+k_{c1}(\dot{\hat{P}}-\dot{P}_{d}-\theta)+k_{c2}(\dot{\hat{V}}-\dot{V}_{d}-\ddot{\theta})
\]
In view of Lemma \ref{lem:Lemma_Qd}, one has $\dot{\alpha}_{1}=\frac{1}{\alpha_{1}}[f_{1},f_{2},(f_{3}-g)]^{\top}\dot{F}$
and $\dot{\alpha}_{2}=\dot{\alpha}_{1}-\dot{f}_{3}$ with $\dot{F}=[\dot{f}_{1},\dot{f}_{2},\dot{f}_{3}]^{\top}$.
As such, it is straight forward to obtain $\dot{\Omega}_{d}=\dot{\Xi}(F)\dot{F}+\Xi(F)\ddot{F}$.

To this end, let us propose the following control laws based on the
direct measurements and estimates of attitude, position, gyro bias,
and linear velocity:
\begin{align}
	& \begin{cases}
		\boldsymbol{\Upsilon}(\tilde{R}_{c}M) & =\sum_{i=1}^{n}\frac{s_{i}}{2}\left((p_{i}-p_{c})\times R_{d}^{\top}y_{i}\right)\\
		w_{c} & =k_{c1}R_{d}\boldsymbol{\Upsilon}(\tilde{R}_{c}M)+k_{c2}(\Omega_{d}-\Omega_{m}+\hat{b}_{\Omega})\\
		\mathcal{T} & =w_{c}+J\dot{\Omega}_{d}-\left[J(\Omega_{m}-\hat{b}_{\Omega})\right]_{\times}\Omega_{d}
	\end{cases}\label{eq:VTOL_Tau}\\
	& \begin{cases}
		\ddot{\theta} & =-k_{\theta1}\tanh(\theta)-k_{\theta2}\tanh(\dot{\theta})\\
		& \hspace{1em}+k_{c3}(\hat{P}-P_{d}-\theta)+k_{c4}(\hat{V}-V_{d}-\dot{\theta})\\
		F & =\ddot{P}_{d}-k_{\theta1}\tanh(\theta)-k_{\theta2}\tanh(\dot{\theta})\\
		\Im & =m||ge_{3}-F||
	\end{cases}\label{eq:VTOL_Thrust}
\end{align}
where $\mathcal{T}$, $J$, $\Im$, $F$, $g$, and $m$ are torque
input, inertia matrix, thrust magnitude, intermediary control input,
gravity, and mass, respectively. $\Omega_{m}$ and $\theta\in\mathbb{R}^{3}$
stand for a gyro measurement and an auxiliary variable, respectively,
while $\hat{b}_{\Omega}$, $\hat{P}$, and $\hat{V}$ define estimates
of gyro bias, position, and linear velocity, respectively. $k_{c1}$,
$k_{c2}$, $k_{c3}$, $k_{c4}$, $k_{\theta1}$, and $k_{\theta2}$
represent positive constants. Fig. \ref{fig:VTOL-1} presents a conceptual
summary of the proposed methodology.
\begin{figure*}
	\centering{}\includegraphics[scale=0.35]{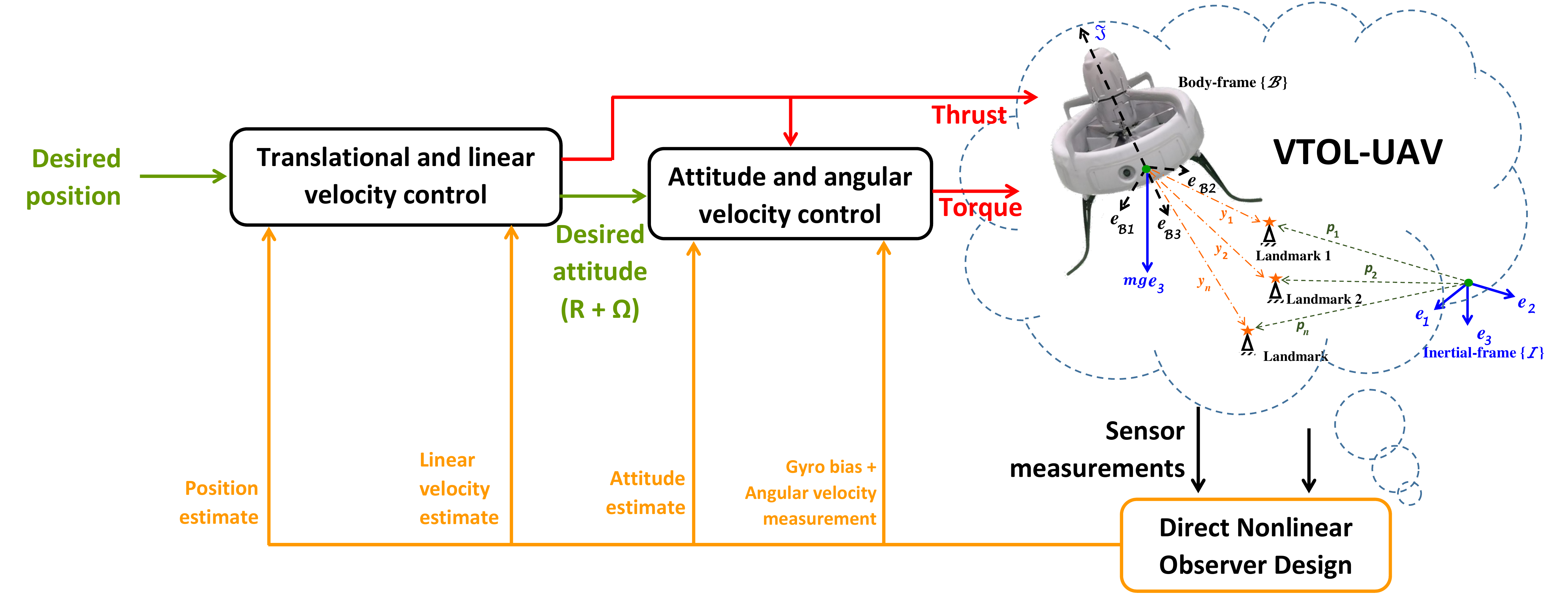}\caption{Illustrative diagram of the proposed observer-based controller for
		VTOL-UAV.}
	\label{fig:VTOL-1}
\end{figure*}

\begin{thm}
	\label{thm:Theorem2}Recall the dynamics in \eqref{eq:VTOL_Rotation}
	and \eqref{eq:VTOL_Translation} and the direct observer design defined
	in \eqref{eq:VTOL_ObsvCompact}. Let Assumption \ref{Assum:P_desired}
	be met and consider the control laws in \eqref{eq:VTOL_Tau} and \eqref{eq:VTOL_Thrust}.
	Then for $\tilde{R}_{o}(0)\notin\mathcal{S}_{u}$ and $\tilde{R}_{c}(0)\notin\mathcal{S}_{u}$,
	$\lim_{t\rightarrow\infty}\tilde{R}_{c}=\tilde{R}_{o}=\mathbf{I}_{3}$,
	$\lim_{t\rightarrow\infty}\tilde{b}_{\Omega}=\sum_{i=1}^{n}s_{i}\tilde{y}_{i}=\tilde{V}_{o}=0_{3\times1}$,
	$\lim_{t\rightarrow\infty}\tilde{\Omega}_{c}=\lim_{t\rightarrow\infty}\tilde{P}_{c}=\lim_{t\rightarrow\infty}\tilde{V}_{c}=0_{3\times1}$,
	and the observer-based controller closed-loop error signals are uniformly
	almost globally exponentially stable.
\end{thm}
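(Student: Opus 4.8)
The plan is to run an interconnected (cascade) Lyapunov argument built directly on Theorem~\ref{thm:Theorem1}. Because the observer \eqref{eq:VTOL_ObsvCompact} is driven by the physical signals $\Omega_m$ and $y_i$ (and by the same thrust $\Im$ that cancels in the error kinematics), its error magnitude $\|e_o\|$, collecting $\sqrt{\|\tilde R_o\|_{\mathrm I}}$, $\|\hat R^\top\tilde b_\Omega\|$, $\|\sum_i s_i\tilde y_i\|$ and $\|\tilde V_o\|$, decays exponentially by \eqref{eq:VTOL_Lo_Total-2} independently of the control errors; the control errors thus form the driven stage of a cascade perturbed by an exponentially vanishing input. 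First I would differentiate \eqref{eq:VTOL_Rerr-c}--\eqref{eq:VTOL_Verr-c}. Using $\dot R_d=-[\Omega_d]_\times R_d$ from \eqref{eq:VTOL_Rd_dot} and $R_d^\top[\omega]_\times R_d=[R_d^\top\omega]_\times$ gives $\dot{\tilde R}_c=[\tilde\Omega_c]_\times\tilde R_c$, so by \eqref{eq:VTOL_Ecul_Dist} and \eqref{eq:VTOL_R_Identity2} one obtains $\tfrac{d}{dt}\|\tilde R_cM\|_{\mathrm I}=\tfrac12\boldsymbol\Upsilon(\tilde R_cM)^\top\tilde\Omega_c$, mirroring \eqref{eq:VTOL_RerrMI_dot}. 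Substituting $J\dot\Omega=[J\Omega]_\times\Omega+\mathcal T$ with $\mathcal T$ from \eqref{eq:VTOL_Tau}, and noting $\Omega_m-\hat b_\Omega=\Omega+\tilde b_\Omega$ so that $\Omega_d-\Omega_m+\hat b_\Omega=R_d\tilde\Omega_c-\tilde b_\Omega$, the gyroscopic and feedforward terms cancel and the angular-velocity error reduces to a stabilizing feedback $\propto -k_{c1}\boldsymbol\Upsilon(\tilde R_cM)-k_{c2}\tilde\Omega_c$ plus terms proportional to the observer bias error $\tilde b_\Omega$.

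Next I would treat the translational channel through $\mathcal E,\dot{\mathcal E}$ of \eqref{eq:VTOL_Q_E}. Writing $\dot V=F-\tfrac{\Im}{m}(R^\top-R_d^\top)e_3$ with $F$ from \eqref{eq:VTOL_Thrust}, and expressing the estimated feedback via $\hat P-P=\tilde P_o+(\tilde R_o-\mathbf I_3)P$ and $\hat V-V=\tilde V_o+(\tilde R_o-\mathbf I_3)V$, the $\ddot\theta$ law of \eqref{eq:VTOL_F_theta} cancels the $\tanh$ terms and yields the Hurwitz system $\ddot{\mathcal E}=-k_{c3}\mathcal E-k_{c4}\dot{\mathcal E}+\Delta$, where $\Delta$ gathers (i) the attitude-induced misalignment $-\tfrac{\Im}{m}(R^\top-R_d^\top)e_3$ and (ii) the observer terms $-k_{c3}(\hat P-P)-k_{c4}(\hat V-V)$. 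Since $\|R^\top-R_d^\top\|_F=\|\tilde R_c-\mathbf I_3\|_F=2\sqrt2\sqrt{\|\tilde R_c\|_{\mathrm I}}$ by the Frobenius identity used after \eqref{eq:VTOL_LyapL2_dot}, term (i) couples the translation error to $\sqrt{\|\tilde R_c\|_{\mathrm I}}$ and term (ii) to $\|e_o\|$. I would then choose two candidates in the spirit of \eqref{eq:VTOL_Lyap_Lo1} and \eqref{eq:VTOL_Lyap_Lo2}: $\mathcal L_{c1}$ assembled from $\mathrm{Tr}\{(\mathbf I_3-\tilde R_c)M\}$ and $\tilde\Omega_c^\top J\tilde\Omega_c$ with a small cross term $\boldsymbol\Upsilon(\tilde R_c)^\top\tilde\Omega_c$ for strict negativity, and $\mathcal L_{c2}=\|\mathcal E\|^2+\tfrac{1}{2k_{c4}}\dot{\mathcal E}^\top\dot{\mathcal E}-\delta_{c2}\dot{\mathcal E}^\top\mathcal E$, each sandwiched between quadratic forms $Q$ and rendered positive definite under gain/$\delta$ conditions exactly as in the observer proof. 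The auxiliary filter $\theta$ in \eqref{eq:VTOL_F_theta} is input-to-state stable and, once $\mathcal E,\dot{\mathcal E},e_o\to0$, reduces to $\ddot\theta=-k_{\theta1}\tanh(\theta)-k_{\theta2}\tanh(\dot\theta)$, so $\theta,\dot\theta\to0$ and hence $\tilde P_c=\mathcal E+\theta$, $\tilde V_c=\dot{\mathcal E}+\dot\theta\to0$.

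A prerequisite that must be secured before any of these bounds are meaningful is that $F$ never reaches the singular configuration $F=[0,0,c]^\top$, $c\ge g$, so that $\Im$, $Q_d$, $\Omega_d$ in Lemma~\ref{lem:Lemma_Qd}, and $\dot\Omega_d$ in \eqref{eq:VTOL_Q_Omd_dot-1}, are well-defined, smooth, and uniformly bounded. This is precisely the role of the saturated construction \eqref{eq:VTOL_F_theta}: the $-k_{\theta1}\tanh(\theta)-k_{\theta2}\tanh(\dot\theta)$ terms keep $F-\ddot P_d$ bounded, and together with Assumption~\ref{Assum:P_desired} (bounded $\ddot P_d$ through $P_d^{(4)}$) this keeps $\alpha_1,\alpha_2$ in \eqref{eq:VTOL_Q_Lambda} bounded away from their singular values, so $\Xi(F),\dot\Xi(F)$ are bounded and $\Omega_d,\dot\Omega_d$ are bounded by $\gamma_d$. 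Boundedness of $\Im$ is then what makes the misalignment-coupling constant finite, in the same way $\eta_P,\eta_V$ were used to form $c_{o2}$ after \eqref{eq:VTOL_LyapL2_dot}.

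Finally I would assemble $\mathcal L_T=\mathcal L_{oT}+\mathcal L_{c1}+\mathcal L_{c2}$ and stack $\mathbf e=[\|e_o\|,\|e_{c1}\|,\|e_{c2}\|]^\top$ with $e_{c1}=[\sqrt{\|\tilde R_c\|_{\mathrm I}},\|\tilde\Omega_c\|]^\top$ and $e_{c2}=[\|\mathcal E\|,\|\dot{\mathcal E}\|]^\top$. The three diagonal blocks contribute $-\underline\lambda_{D_o}\|e_o\|^2-\underline\lambda_{D_{c1}}\|e_{c1}\|^2-\underline\lambda_{D_{c2}}\|e_{c2}\|^2$, while the cascade couplings give off-diagonal terms $c\|e_o\|\|e_{c1}\|$, $c\|e_o\|\|e_{c2}\|$ and $c\|e_{c1}\|\|e_{c2}\|$, so that, as in \eqref{eq:VTOL_LyapLodot_Total}, $\dot{\mathcal L}_T\le-\mathbf e^\top D_T\mathbf e$ for a $3\times3$ comparison matrix $D_T$. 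Because the interconnection is a genuine cascade, $D_T$ can be made positive definite by a lower-triangular-dominant gain ordering: fix the observer gains first (Theorem~\ref{thm:Theorem1}), then enlarge $k_{c1},k_{c2}$ relative to the observer-to-attitude coupling, and finally $k_{c3},k_{c4}$ relative to both the attitude-to-translation and observer-to-translation couplings. Then $\mathcal L_T(t)\le\mathcal L_T(0)\exp(-\underline\lambda_{D_T}t/\eta_T)$ yields exponential convergence of $\tilde R_c\to\mathbf I_3$ and $\tilde\Omega_c,\mathcal E,\dot{\mathcal E},\tilde P_c,\tilde V_c\to0$ simultaneously with the observer errors; the conditions $\tilde R_o(0),\tilde R_c(0)\notin\mathcal S_u$ exclude the measure-zero antipodal set of Definition~\ref{def:Unstable-set}, giving almost global exponential stability. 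I expect the main obstacle to be the third step: verifying singularity avoidance of $F$ and the resulting uniform bounds on $\Im,\Omega_d,\dot\Omega_d$ in closed loop, since these are exactly what make every coupling constant finite and hence what allows $D_T$ to be rendered positive definite by gain selection.
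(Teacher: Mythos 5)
Your proposal is correct and follows essentially the same route as the paper: the same error kinematics, the same cross-term Lyapunov candidates $\mathcal{L}_{c1}$ and $\mathcal{L}_{c2}$ sandwiched between quadratic forms, the same Frobenius bound $\|R^{\top}-R_{d}^{\top}\|_{F}=2\sqrt{2}\sqrt{\|\tilde{R}_{c}\|_{\rm I}}$ to control the thrust misalignment, the same cascade comparison-matrix assembly (your single $3\times3$ matrix is equivalent to the paper's nested $2\times2$ construction), and the same terminal argument that $\ddot{\theta}\rightarrow-k_{\theta1}\tanh(\theta)-k_{\theta2}\tanh(\dot{\theta})$ forces $\theta,\dot{\theta}\rightarrow0$ and hence $\tilde{P}_{c},\tilde{V}_{c}\rightarrow0$. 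Your added emphasis on verifying that the saturated construction of $F$ keeps it away from the singular configuration of Lemma~\ref{lem:Lemma_Qd} is a point the paper treats only briefly (via boundedness of $\hat{P}$, $\hat{V}$, $P_{d}$, $V_{d}$), but it does not change the argument.
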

\begin{proof}From \eqref{eq:VTOL_Rerr-c}, \eqref{eq:VTOL_Rotation}
	and \eqref{eq:VTOL_Rd_dot}, one obtains
	\begin{align}
		\dot{\tilde{R}}_{c} & =\dot{R}_{d}^{\top}R+R_{d}^{\top}\dot{R}=[R_{d}^{\top}(\Omega_{d}-\Omega)]_{\times}\tilde{R}_{c}\nonumber \\
		& =[\tilde{\Omega}_{c}]_{\times}\tilde{R}_{c}\label{eq:VTOL_Rcerr_dot}
	\end{align}
	where $R_{d}^{\top}[\Omega]_{\times}R_{d}=[R_{d}^{\top}\Omega]_{\times}$.
	Let us define $||\tilde{R}_{c}||_{{\rm I}}=\frac{1}{4}{\rm Tr}\{\mathbf{I}_{3}-\tilde{R}_{c}\}$
	and $||\tilde{R}_{c}M||_{{\rm I}}=\frac{1}{4}{\rm Tr}\{(\mathbf{I}_{3}-\tilde{R}_{c})M\}$.
	From \eqref{eq:VTOL_Ecul_Dist}, \eqref{eq:VTOL_R_Identity2}, and
	\eqref{eq:VTOL_Rcerr_dot}, it becomes apparent that \cite{hashim2019AtiitudeSurvey}
	\begin{align}
		\frac{d}{dt}||\tilde{R}_{c}||_{{\rm I}}= & \frac{1}{2}\boldsymbol{\Upsilon}(\tilde{R}_{c})^{\top}\tilde{\Omega}_{c}\label{eq:VTOL_RerrIc_dot}\\
		\frac{d}{dt}||\tilde{R}_{c}M||_{{\rm I}}= & \frac{1}{2}\boldsymbol{\Upsilon}(\tilde{R}_{c}M)^{\top}\tilde{\Omega}_{c}\label{eq:VTOL_RerrIcM_dot}
	\end{align}
	Using \eqref{eq:VTOL_Rotation}, \eqref{eq:VTOL_Omerr-c}, \eqref{eq:VTOL_Rcerr_dot},
	and $\mathcal{T}$ in \eqref{eq:VTOL_Tau}, one finds
	\begin{align}
		\frac{d}{dt}JR_{d}\tilde{\Omega}_{c}= & J\dot{\Omega}_{d}-J\dot{\Omega}\nonumber \\
		= & J\dot{\Omega}_{d}-[J\Omega]_{\times}R_{d}\tilde{\Omega}_{c}-[J\Omega]_{\times}\Omega_{d}-\mathcal{T}\nonumber \\
		= & -[J\Omega]_{\times}R_{d}\tilde{\Omega}_{c}-[\Omega_{d}]_{\times}J\tilde{b}_{\Omega}-w_{c}\label{eq:VTOL_OmerrC_dot}
	\end{align}
	Using \eqref{eq:VTOL_Perr-c}, \eqref{eq:VTOL_Verr-c}, and \eqref{eq:VTOL_Translation},
	position and velocity errors are as follows:
	\begin{equation}
		\begin{cases}
			\dot{\tilde{P}}_{c} & =\tilde{V}_{c}\\
			\dot{\tilde{V}}_{c} & =ge_{3}-\frac{\Im}{m}R^{\top}e_{3}-\dot{V}_{d}
		\end{cases}\label{eq:VTOL_P_V_errC_dot}
	\end{equation}
	As such, the derivatives of the variables in \eqref{eq:VTOL_Q_E}
	are equivalent to
	\begin{equation}
		\begin{cases}
			\dot{\mathcal{E}} & =\tilde{V}_{c}-\dot{\theta}\\
			\ddot{\mathcal{E}} & =F-||ge_{3}-F||(R^{\top}-R_{d}^{\top})e_{3}-\ddot{P}_{d}-\ddot{\theta}
		\end{cases}\label{eq:VTOL_E_dot}
	\end{equation}
	Let us define $C_{c}=\frac{1}{2}{\rm Tr}\{(\mathbf{I}_{3}-\tilde{R}_{c})M\}+\frac{1}{2k_{c1}}\tilde{\Omega}_{c}^{\top}R_{d}^{\top}JR_{d}\tilde{\Omega}_{c}$.
	From \eqref{eq:VTOL_Omestdot} and \eqref{eq:VTOL_RerrMI_dot}, one
	obtains
	\begin{align}
		\dot{C}_{c}= & \boldsymbol{\Upsilon}(\tilde{R}_{c}M)^{\top}\tilde{\Omega}_{c}\nonumber \\
		& -\frac{1}{k_{c1}}\tilde{\Omega}_{c}^{\top}R_{d}^{\top}([J\Omega]_{\times}R_{d}\tilde{\Omega}_{c}+[\Omega_{d}]_{\times}J\tilde{b}_{\Omega}+w_{c})\nonumber \\
		\leq & -\frac{k_{c2}}{k_{c1}}||\tilde{\Omega}_{c}||^{2}+\frac{k_{c2}+\overline{\lambda}_{J}\eta_{d}}{k_{c1}}||\tilde{\Omega}_{c}||\,||\tilde{b}_{\Omega}||\label{eq:VTOL_LyapLc2-1}
	\end{align}
	where $\eta_{d}=\sup_{t\geq0}\{||\Omega_{d}||\}$ and $\tilde{\Omega}_{c}^{\top}R_{d}^{\top}[J\Omega]_{\times}R_{d}\tilde{\Omega}_{c}=0$.
	In view of \eqref{eq:VTOL_Lo_Total-2}, $||\tilde{b}_{\Omega}||$
	is bounded and $||\tilde{b}_{\Omega}||\rightarrow0$. As such, $||\tilde{\Omega}_{c}||$
	is bounded. The derivative of $\boldsymbol{\Upsilon}(\tilde{R}_{c})$
	is as follows \cite{hashim2019AtiitudeSurvey}:
	\begin{align}
		\boldsymbol{\Upsilon}(\dot{\tilde{R}}_{c}) & =\frac{1}{2}\Psi(\tilde{R}_{c})\tilde{\Omega}_{c}\label{eq:VTOL_vex_dot-1}
	\end{align}
	with $\Psi(\tilde{R}_{c})={\rm Tr}\{\tilde{R}_{c}\}\mathbf{I}_{3}-\tilde{R}_{c}$.
	Accordingly, one finds
	\begin{align}
		& \frac{1}{\delta_{c1}}\frac{d}{dt}\boldsymbol{\Upsilon}(\tilde{R}_{c})^{\top}\tilde{\Omega}_{c}\nonumber \\
		& \leq-\frac{k_{c1}c_{c2}}{\delta_{c1}}||\tilde{R}_{c}||_{{\rm I}}+(\frac{c_{c3}}{\delta_{c1}}||\tilde{\Omega}_{c}||+\frac{c_{c3}}{\delta_{c1}}||\tilde{b}_{\Omega}||)\sqrt{||\tilde{R}_{c}||_{{\rm I}}}\label{eq:VTOL_vex_eq2}
	\end{align}
	$\eta_{\Omega_{c}}=\sup_{t\geq0}\{||\tilde{\Omega}_{c}||\}$, $c_{c2}=\underline{\lambda}_{\overline{M}}^{2}/\overline{\lambda}_{J}$,
	and $c_{c3}=\max\{\frac{\eta_{\Omega_{c}}}{2}+c_{c1}\overline{\lambda}_{\overline{M}},k_{c2}+\overline{\lambda}_{J}\eta_{d}\}$.
	Define the following Lyapunov function candidate $\mathcal{L}_{c1}:\mathbb{SO}\left(3\right)\times\mathbb{R}^{3}\rightarrow\mathbb{R}_{+}$:
	\begin{equation}
		\mathcal{L}_{c1}=2||\tilde{R}_{c}M||_{{\rm I}}+\frac{1}{2k_{c1}}\tilde{\Omega}_{c}^{\top}R_{d}^{\top}JR_{d}\tilde{\Omega}_{c}+\frac{1}{\delta_{c1}}\boldsymbol{\Upsilon}(\tilde{R}_{c})^{\top}\tilde{\Omega}_{c}\label{eq:VTOL_LyapLc2}
	\end{equation}
	wher{\small{}e
		\[
		e_{c1}^{\top}\underbrace{\left[\begin{array}{cc}
				\underline{\lambda}_{\overline{M}} & -\frac{1}{2\delta_{c1}}\\
				-\frac{1}{2\delta_{c1}} & \frac{1}{2k_{c1}}
			\end{array}\right]}_{Q_{5}}e_{c1}\leq\mathcal{L}_{c1}\leq e_{c1}^{\top}\underbrace{\left[\begin{array}{cc}
				\overline{\lambda}_{\overline{M}} & \frac{1}{2\delta_{c1}}\\
				\frac{1}{2\delta_{c1}} & \frac{1}{2k_{c1}}
			\end{array}\right]}_{Q_{6}}e_{c1}
		\]
	}with $e_{c1}=[\sqrt{||\tilde{R}_{c}||_{{\rm I}}},||\tilde{\Omega}_{c}||]^{\top}$.
	It becomes obvious that $Q_{5}$ and $Q_{6}$ are made positive by
	setting $\delta_{c1}>\sqrt{\frac{k_{c1}}{2\underline{\lambda}_{\overline{M}}}}$.
	From \eqref{eq:VTOL_vex_eq2}, \eqref{eq:VTOL_LyapLc2}, \eqref{eq:VTOL_LyapLc2-1},
	one has
	\begin{align}
		\dot{\mathcal{L}}_{c1}\leq & -e_{c1}^{\top}\underbrace{\left[\begin{array}{cc}
				\frac{k_{c1}c_{c2}}{\delta_{c1}} & \frac{c_{c3}}{2\delta_{c1}}\\
				\frac{c_{c3}}{2\delta_{c1}} & \frac{k_{c2}}{k_{c1}}
			\end{array}\right]}_{D_{c1}}e_{c1}+\frac{c_{c3}}{k_{c1}}||\tilde{\Omega}_{c}||\,||\tilde{b}_{\Omega}||\nonumber \\
		& +\frac{c_{c3}}{\delta_{c1}}||\tilde{b}_{\Omega}||\sqrt{||\tilde{R}_{c}||_{{\rm I}}}\nonumber \\
		\leq & -\underline{\lambda}_{D_{c1}}||e_{c1}||^{2}+c_{r}(||\tilde{\Omega}_{c}||+\sqrt{||\tilde{R}_{c}||_{{\rm I}}})||\tilde{b}_{\Omega}||\label{eq:VTOL_LyapLc2dot}
	\end{align}
	with $c_{r}=\max\{\frac{c_{c3}}{k_{c1}},\frac{c_{c3}}{\delta_{c1}}\}$.
	$D_{c1}$ is positive if $\delta_{c1}>\frac{c_{c3}^{2}}{4c_{c2}k_{c2}}$.
	Let us set $\delta_{c1}>\max\{\sqrt{\frac{k_{c1}}{2\underline{\lambda}_{\overline{M}}}},\frac{c_{c3}^{2}}{4c_{c2}k_{c2}}\}$
	with $\underline{\lambda}_{D_{c1}}$ being the minimum eigenvalue
	of $D_{c1}$. Thereby, one finds
	\begin{align}
		\mathcal{L}_{c1}\leq & -\underline{\lambda}_{D_{c1}}||e_{c1}||^{2}+c_{r}||e_{o1}||\,||e_{c1}||\label{eq:VTOL_LyapLc3dot}
	\end{align}
	where $e_{o1}=[\sqrt{||\tilde{R}_{o}||_{{\rm I}}},||\hat{R}^{\top}\tilde{b}_{\Omega}||]^{\top}$.
	Since $\hat{P}$, $\hat{V}$, $P_{d}$, and $V_{d}$ are bounded,
	$\ddot{\theta}$ and $F$ are bounded and, in turn, $\Im$ is bounded.
	Considering the fact that $||\mathbf{I}_{3}-\tilde{R}_{c}||_{F}=2\sqrt{2}\sqrt{||\tilde{R}_{c}||_{{\rm I}}}$,
	one finds that $||ge_{3}-F||(R^{\top}-R_{d}^{\top})e_{3}\leq4(||ge_{3}||+\sup_{t\geq0}\{||\ddot{P}_{d}||\}+(k_{\theta1}+k_{\theta2})\sqrt{||\tilde{R}_{c}||_{{\rm I}}}\triangleq4\Lambda\,\sqrt{||\tilde{R}_{c}||_{{\rm I}}}$
	indicating that
	\begin{align}
		||ge_{3}-F||(R^{\top}-R_{d}^{\top})e_{3}\leq & 4\Lambda\,\sqrt{||\tilde{R}_{c}||_{{\rm I}}}\label{eq:VTOL_RHO}
	\end{align}
	where $\Lambda$ is an upper bounded positive constant. Based on \eqref{eq:VTOL_Q_E},
	let us introduce the following Lyapunov function candidate $\mathcal{L}_{c2}:\mathbb{R}^{3}\times\mathbb{R}^{3}\rightarrow\mathbb{R}_{+}$:
	\begin{equation}
		\mathcal{L}_{c2}=\frac{1}{2}\mathcal{E}^{\top}\mathcal{E}+\frac{1}{2k_{c3}}\dot{\mathcal{E}}^{\top}\dot{\mathcal{E}}+\frac{1}{\delta_{c2}}\mathcal{E}^{\top}\dot{\mathcal{E}}\label{eq:VTOL_LyapL1C}
	\end{equation}
	such tha{\small{}t}
	\[
	e_{c2}^{\top}\underbrace{\left[\begin{array}{cc}
			\frac{1}{2} & \frac{-1}{2\delta_{c2}}\\
			\frac{-1}{2\delta_{c2}} & \frac{1}{2k_{c3}}
		\end{array}\right]}_{Q_{7}}e_{c2}\leq\mathcal{L}_{c2}\leq e_{c2}^{\top}\underbrace{\left[\begin{array}{cc}
			\frac{1}{2} & \frac{1}{2\delta_{c2}}\\
			\frac{1}{2\delta_{c2}} & \frac{1}{2k_{c3}}
		\end{array}\right]}_{Q_{8}}e_{c2}
	\]
	where $e_{c2}=[||\mathcal{E}||,||\dot{\mathcal{E}}||]^{\top}$. $Q_{7}$
	and $Q_{8}$ can be made positive by selecting $\delta_{c2}>\sqrt{k_{c3}}$.
	Using \eqref{eq:VTOL_Thrust}, \eqref{eq:VTOL_E_dot}, and \eqref{eq:VTOL_RHO},
	one finds
	\begin{align}
		& \dot{\mathcal{L}}_{c2}\leq-e_{c2}^{\top}\underbrace{\left[\begin{array}{cc}
				\frac{k_{c3}}{\delta_{c2}} & \frac{k_{c4}}{2\delta_{c2}}\\
				\frac{k_{c4}}{2\delta_{c2}} & \frac{k_{c4}}{k_{c3}}-\frac{1}{\delta_{c2}}
			\end{array}\right]}_{D_{c2}}e_{c2}\nonumber \\
		& +c_{m}(||\mathcal{E}||+||\dot{\mathcal{E}}||)(||\sum_{i=1}^{n}s_{i}\tilde{y}_{i}||+||\tilde{V}_{o}||+2\sqrt{||\tilde{R}_{c}||_{{\rm I}}})\label{eq:VTOL_LyapL1Cdot}
	\end{align}
	with $c_{p}=\max\{\sup_{t\geq0}||\frac{k_{c3}}{\delta_{c2}}p_{c}+\frac{k_{c4}}{\delta_{c2}}V||,\sup_{t\geq0}||p_{c}+\frac{k_{c4}}{k_{c3}}V||\}$
	and $c_{m}=\max\{\frac{1}{s_{T}},\frac{k_{c4}}{k_{c3}},\frac{k_{c3}}{\delta_{c2}s_{T}},\frac{k_{c4}}{\delta_{c2}},\frac{1}{k_{c2}},\frac{1}{k_{c3}},c_{p},4\Lambda\}$.
	It is worth noting that $\tilde{P}_{o}=\frac{1}{s_{T}}\sum_{i=1}^{n}s_{i}\tilde{y}_{i}-(\tilde{R}_{o}-\mathbf{I}_{3})p_{c}$,
	$\hat{P}-P_{d}-\theta=-\frac{1}{s_{T}}\sum_{i=1}^{n}s_{i}\tilde{y}_{i}+(\tilde{R}_{o}-\mathbf{I}_{3})p_{c}+\mathcal{E}$,
	and $\hat{V}-V_{d}-\dot{\theta}=\tilde{V}_{o}+(\tilde{R}_{o}-\mathbf{I}_{3})V+\dot{\mathcal{E}}$.
	Let us select $\delta_{c2}>\frac{k_{c4}^{2}+4k_{c3}}{4k_{c4}}$ to
	make $D_{c2}$ positive. Also, consider setting $\delta_{c2}>\max\{\sqrt{k_{c3}},\frac{k_{c4}^{2}+4k_{c3}}{4k_{c4}}\}$.
	In view of \eqref{eq:VTOL_LyapLodot_Total}, $\tilde{P}_{o}$ and
	$\tilde{V}_{o}$ are bounded, and based on \eqref{eq:VTOL_LyapLc3dot},
	$\sqrt{||\tilde{R}_{c}||_{{\rm I}}}$ is bounded and, in turn, $\mathcal{L}_{c2}$
	is bounded. Let us use \eqref{eq:VTOL_LyapL1C} and \eqref{eq:VTOL_LyapLc2},
	and define the following Lyapunov function candidate $\mathcal{L}_{cT}:\mathbb{SO}\left(3\right)\times\mathbb{R}^{3}\times\mathbb{R}^{3}\times\mathbb{R}^{3}\rightarrow\mathbb{R}_{+}$:
	\begin{equation}
		\mathcal{L}_{cT}=\mathcal{L}_{c1}+\mathcal{L}_{c2}\label{eq:VTOL_Q_LyapLc-Final}
	\end{equation}
	From \eqref{eq:VTOL_LyapL1Cdot} and \eqref{eq:VTOL_LyapLc2dot},
	one obtains
	\begin{align}
		\dot{\mathcal{L}}_{cT}\leq & -{\bf e}_{c}^{\top}\underbrace{\left[\begin{array}{cc}
				\underline{\lambda}_{D_{c1}} & -c_{m}\\
				-c_{m} & \underline{\lambda}_{D_{c2}}
			\end{array}\right]}_{D_{c}}{\bf e}_{c}+c_{r}||e_{o1}||\,||e_{c1}||\nonumber \\
		& +c_{m}||e_{o2}||\,||e_{c2}||\label{eq:VTOL_Q_LyapLcdot-Final}
	\end{align}
	where ${\bf e}_{c}=[||e_{c1}||,||e_{c2}||]^{\top}$ and $e_{o2}=[||\sum_{i=1}^{n}s_{i}\tilde{y}_{i}||,||\tilde{V}_{o}||]^{\top}$.
	Select $\underline{\lambda}_{D_{c1}}>c_{m}^{2}/\underline{\lambda}_{D_{c2}}$
	to make $D_{c}$ positive. In view of \eqref{eq:VTOL_LyapLo_Total}
	and \eqref{eq:VTOL_Q_LyapLc-Final}, let us consider the following
	Lyapunov function candidate:
	\begin{equation}
		\mathcal{L}_{T}=\mathcal{L}_{oT}+\mathcal{L}_{cT}\label{eq:VTOL_Q_LyapL-Final}
	\end{equation}
	Let $c_{c}=\max\{c_{r},c_{m}\}$. Thereby, from \eqref{eq:VTOL_LyapLodot_Total}
	and \eqref{eq:VTOL_Q_LyapLcdot-Final}, one obtains
	\begin{align}
		\dot{\mathcal{L}}_{T}\leq & -\left[\begin{array}{c}
			||{\bf e}_{o}||\\
			||{\bf e}_{c}||
		\end{array}\right]^{\top}\underbrace{\left[\begin{array}{cc}
				\underline{\lambda}_{D_{o}} & \frac{c_{c}}{2}\\
				\frac{c_{c}}{2} & \underline{\lambda}_{D_{c}}
			\end{array}\right]}_{D}\left[\begin{array}{c}
			||{\bf e}_{o}||\\
			||{\bf e}_{c}||
		\end{array}\right]\label{eq:VTOL_Q_LyapL-Final-dot}
	\end{align}
	where ${\bf e}_{o}=[||e_{o1}||,||e_{o2}||]^{\top}$. $D$ can be made
	positive by setting $\underline{\lambda}_{D_{o}}>\frac{c_{c}^{2}}{4\underline{\lambda}_{D_{c}}}$.
	Hence, let us select $\underline{\lambda}_{D_{o}}>\frac{c_{c}^{2}}{4\underline{\lambda}_{D_{c}}}$.
	Define $\underline{\lambda}_{D}$ as the minimum eigenvalue of $D$.
	By letting $\eta_{Q}=\max\{\overline{\lambda}(Q_{1}),\overline{\lambda}(Q_{2}),\ldots,\overline{\lambda}(Q_{8})\}$,
	one obtains
	\begin{align*}
		\dot{\mathcal{L}}_{T}\leq & -\lambda_{D}(||\tilde{R}_{o}||_{{\rm I}}+||\tilde{b}_{\Omega}||^{2}+||\sum_{i=1}^{n}s_{i}\tilde{y}_{i}||+||\tilde{V}_{o}||^{2})\\
		& -\lambda_{D}(||\tilde{R}_{c}||_{{\rm I}}+||\tilde{\Omega}_{c}||^{2}+|\mathcal{E}||^{2}+||\dot{\mathcal{E}}||^{2})
	\end{align*}
	such that
	\begin{equation}
		\dot{\mathcal{L}}_{T}\leq-(\underline{\lambda}_{D}/\eta_{Q})\mathcal{L}_{T}(t)\label{eq:VTOL_Q_LyapL-dot-Total-1}
	\end{equation}
	and thereby
	\begin{align}
		\mathcal{L}_{T}(t)\leq & \mathcal{L}_{T}(0)\exp(-t\underline{\lambda}_{D}/\eta_{Q}),\hspace{1em}\forall t\geq0\label{eq:VTOL_Q_LyapL-dot-Total}
	\end{align}
	Based on \eqref{eq:VTOL_Q_LyapL-dot-Total}, it becomes apparent that
	$\lim_{t\rightarrow\infty}\tilde{R}_{o}=\lim_{t\rightarrow\infty}\tilde{R}_{c}=\mathbf{I}_{3}$,
	$\lim_{t\rightarrow\infty}||\tilde{\Omega}_{o}||=\lim_{t\rightarrow\infty}||\sum_{i=1}^{n}s_{i}\tilde{y}_{i}||=\lim_{t\rightarrow\infty}||\tilde{V}_{o}||=0$,
	and $\lim_{t\rightarrow\infty}||\tilde{\Omega}_{c}||=\lim_{t\rightarrow\infty}||\mathcal{E}||=\lim_{t\rightarrow\infty}||\dot{\mathcal{E}}||=0$.
	According to the definition of $\ddot{\theta}$ in \eqref{eq:VTOL_Thrust}
	and the convergence of $\mathcal{L}_{T}(t)$ to the origin, $\ddot{\theta}\rightarrow-k_{\theta1}\tanh(\theta)-k_{\theta2}\tanh(\dot{\theta})$
	as $\mathcal{E},\dot{\mathcal{E}}\rightarrow0$ which shows that $||\tanh(\theta)||$
	and $||\tanh(\dot{\theta})||$ are strictly decreasing with $||\tanh(\theta)||\rightarrow0$
	and $||\tanh(\dot{\theta})||\rightarrow0$, and therefore $\lim_{t\rightarrow\infty}||\theta||=\lim_{t\rightarrow\infty}||\dot{\theta}||=0$.
	Thus, the closed-loop error signals of the observer-based controller
	design are uniformly almost globally exponentially stable. This completes
	the proof of Theorem \ref{thm:Theorem2}.\end{proof}

\section{Implementation Steps \label{sec:VTOL_Implementation}}

In this subsection, the proposed direct VTOL-UAV observer-based controller
is implemented in its discrete form. Let $\Delta t$ be a small sample
time step. By following the steps below, one can seamlessly implement
the novel observer-based controller. 

\textbf{Step 1.} Set $\hat{b}_{\Omega|0},\hat{P}_{0},\hat{V}_{0},\theta_{0},\dot{\theta}_{0}\in\mathbb{R}^{3}$,
$\hat{R}_{0}\in\mathbb{SO}(3)$ with $\hat{X}_{0}=\left[\begin{array}{ccc}
	\hat{R}_{0}^{\top} & \hat{P}_{0} & \hat{V}_{0}\\
	0_{1\times3} & 1 & 0\\
	0_{1\times3} & 0 & 1
\end{array}\right]$, and $k=1$.\vspace{0.2cm}

\textbf{Step 2.} (Thrust evaluation) Calculate the auxiliary variable
$\ddot{\theta}_{k}=-k_{\theta1}\tanh(\theta_{k-1})-k_{\theta2}\tanh(\dot{\theta}_{k-1})+k_{c3}(\hat{P}_{k}-P_{d|k}-\theta_{k-1})+k_{c4}(\hat{V}_{k}-V_{d|k}-\dot{\theta}_{k-1})$
where $\dot{\theta}_{k}=\dot{\theta}_{k-1}+\Delta t\ddot{\theta}_{k}$
and $\theta_{k}=\theta_{k-1}+\Delta t\dot{\theta}_{k}$. Next, evaluate
the intermediary control input and the thrust as in \eqref{eq:VTOL_Thrust}
\begin{align*}
	F_{k} & =\ddot{P}_{d}-k_{\theta1}\tanh(\theta_{k})-k_{\theta2}\tanh(\dot{\theta}_{k})=[f_{1},f_{2},f_{3}]^{\top}\\
	\Im_{k} & =m||ge_{3}-F_{k}||
\end{align*}

\textbf{Step 3.} (Desired unit-quaternion) The desired unit-quaternion
vector can be evaluated as in \eqref{eq:VTOL_Q_Qd} by
\[
q_{d0|k}=\sqrt{\frac{m}{2\Im_{k}}(g-f_{3})+\frac{1}{2}},\hspace{1em}q_{d|k}=\left[\begin{array}{c}
	\frac{m}{2\Im_{k}q_{d0}}f_{2}\\
	-\frac{m}{2\Im_{k}q_{d0}}f_{1}\\
	0
\end{array}\right]
\]
with $F_{k}=[f_{1},f_{2},f_{3}]^{\top}\in\mathbb{R}^{3}$, $Q_{d|k}=[q_{d0|k},q_{d|k}^{\top}]^{\top}\in\mathbb{S}^{3}$,
and 
\[
R_{d|k}=(q_{d0|k}^{2}-||q_{d|k}||^{2})\mathbf{I}_{3}+2q_{d|k}q_{d|k}^{\top}-2q_{d0|k}[q_{d|k}]_{\times}\in\mathbb{SO}\left(3\right)
\]

\textbf{Step 4.} (Direct sensor measurements and correction factors)
Collect the following set of measurements for the observer design:
\[
\begin{cases}
	\boldsymbol{\Upsilon}(\tilde{R}_{o}M) & =\sum_{i=1}^{n}\frac{s_{i}}{2}\left((p_{i}-p_{c})\times\hat{R}_{k}^{\top}y_{i}\right)\\
	\sum_{i=1}^{n}s_{i}\tilde{y}_{i} & =\sum_{i=1}^{n}s_{i}(\hat{P}_{k}+\hat{R}_{k}^{\top}y_{i}-p_{i})
\end{cases}
\]
The correction factors can be calculated using
\[
\begin{cases}
	w_{\Omega} & =k_{o1}\sum_{i=1}^{n}\frac{s_{i}}{2}\left((p_{i}-p_{c})\times\hat{R}_{k}^{\top}y_{i}\right)\\
	w_{V} & =k_{o2}\sum_{i=1}^{n}s_{i}\tilde{y}_{i}-\frac{1}{s_{T}}[w_{\Omega}]_{\times}(\sum_{i=1}^{n}s_{i}\tilde{y}_{i}+s_{T}p_{c})\\
	w_{a} & =-ge_{3}+k_{o3}\sum_{i=1}^{n}s_{i}\tilde{y}_{i}
\end{cases}
\]

\textbf{Step 5.} (Prediction) The estimates of attitude, position,
and linear velocity are defined by
\begin{align*}
	\hat{U}_{k} & =\left[\begin{array}{ccc}
		[\Omega_{m}-\hat{b}_{\Omega}\text{\ensuremath{]_{\times}}} & 0_{3\times1} & -\frac{\Im_{k}}{m}e_{3}\\
		0_{1\times3} & 0 & 0\\
		0_{1\times3} & 1 & 0
	\end{array}\right]\in\mathcal{U}_{\mathcal{M}}\\
	\hat{X}_{k|k-1} & =\hat{X}_{k-1}\exp(\hat{U}_{k}\Delta t)
\end{align*}
$\exp(\cdot)$ stands for exponential of a matrix.\vspace{0.2cm}

\textbf{Step 6.} (Correction) $W=\left[\begin{array}{ccc}
	\left[w_{\Omega}\right]_{\times} & w_{V} & w_{a}\\
	0_{1\times3} & 0 & 0\\
	0_{1\times3} & 1 & 0
\end{array}\right]\in\mathcal{U}_{\mathcal{M}}$ and 
\begin{align*}
	\hat{X}_{k} & =\exp(-W\Delta t)\hat{X}_{k|k-1}
\end{align*}
with
\begin{align*}
	\hat{P}_{k} & =\hat{X}_{k}(1:3,4)\\
	\hat{V}_{k} & =\hat{X}_{k}(1:3,5)\\
	\hat{R}_{k} & =\hat{X}_{k}(1:3,1:3)^{\top}
\end{align*}

\textbf{Step 7.} Derivatives of the intermediary control inputs can
be calculated as in \eqref{eq:VTOL_Fdot}
\begin{align*}
	\dot{F}_{k} & =P_{d}^{(3)}-k_{\theta1}H\dot{\theta}_{k}-k_{\theta2}\dot{H}\ddot{\theta}_{k}\\
	\ddot{F}_{k} & =P_{d}^{(4)}-k_{\theta1}Z-k_{\theta2}\dot{Z}
\end{align*}
Next, $\Xi(F)$ can be calculated, see \eqref{eq:VTOL_Q_Lambda},
as well as its derivative $\dot{\Xi}(F)$.\vspace{0.2cm}

\textbf{Step 8.} The desired angular velocity and its derivative can
be calculated as in \eqref{eq:VTOL_Q_Omd} and \eqref{eq:VTOL_Q_Omd_dot-1},
respectively
\[
\Omega_{d|k}=\Xi(F_{k})\dot{F}_{k},\hspace{1em}\dot{\Omega}_{d|k}=\dot{\Xi}(F_{k})\dot{F}_{k}+\Xi(F_{k})\ddot{F}_{k}
\]

\textbf{Step 9.} (Torque input) The rotational torque is defined by
\[
\begin{cases}
	\boldsymbol{\Upsilon}(\tilde{R}_{c}M) & =\sum_{i=1}^{n}\frac{s_{i}}{2}\left((p_{i}-p_{c})\times R_{d}^{\top}y_{i}\right)\\
	w_{c} & =k_{c1}R_{d|k}\sum_{i=1}^{n}\frac{s_{i}}{2}\left((p_{i}-p_{c})\times R_{d|k}^{\top}y_{i}\right)\\
	& \hspace{1em}+k_{c2}(\Omega_{d|k}-\Omega_{m}+\hat{b}_{\Omega})\\
	\mathcal{T} & =w_{c}+J\dot{\Omega}_{d|k}-\left[J(\Omega_{m}-\hat{b}_{\Omega})\right]_{\times}\Omega_{d|k}
\end{cases}
\]

\textbf{Step 10.} (Gyro bias estimate) The estimate of gyro bias can
be calculated using
\[
\hat{b}_{\Omega|k+1}=\hat{b}_{\Omega|k}+\Delta t\gamma_{o}\hat{R}_{k}\sum_{i=1}^{n}\frac{s_{i}}{2}\left((p_{i}-p_{c})\times\hat{R}_{k}^{\top}y_{i}\right)
\]

\textbf{Step 11.} Set $k=k+1$ and go to \textbf{Step 2}.

\section{Numerical Results\label{sec:SE3_Simulations}}

In this Section, the proposed direct observer-based controller for
VTOL-UAV is evaluated considering (i) a VA-INS unit at a low sampling
rate of 1000 Hz, (ii) large error initialization, as well as (iii)
unknown uncertain components inherent in the measurements. Let us
set the initial values of the VTOL-UAV motion components as follows:
\[
R_{0}=\left[\begin{array}{ccc}
	0.5763 & -0.7638 & 0.2907\\
	0.8147 & 0.5085 & -0.2789\\
	0.0652 & 0.3976 & 0.9153
\end{array}\right]\in\mathbb{SO}(3)
\]
initial position $P_{0}=[-1,-1,0]^{\top}$, initial angular velocity
$\Omega_{0}=[0,0,0]^{\top}$, and initial linear velocity $V_{0}=[1,1,0]^{\top}$.
Let the UAV mass be $m=3\,\text{kg}$ and the inertia be $J={\rm diag}(0.15,0.23,0.16)\,\text{kg}.\text{m}^{2}$.
Define the desired trajectory as 
\[
P_{d}=6[\cos(0.19t),\sin(0.2t)\cos(0.2t),\frac{1}{6}(3.5+0.15t)]^{\top}\,\text{m}
\]
with a total travel time of 50 seconds. Let us set the initial estimates
of the VTOL-UAV motion components as follows: $\hat{R}_{0}=\mathbf{I}_{3}$,
$\hat{P}_{0}=\hat{V}_{0}=[0,0,0]^{\top}$, and $\hat{b}_{\Omega}(0)=[0,0,0]^{\top}$.
In addition, set $\hat{\theta}_{0}=\dot{\hat{\theta}}_{0}=[0,0,0]^{\top}$,
and consider a set of five randomly distributed non-collinear features
to satisfy Assumption \ref{Assum:VTOL_1Landmark}. Let the measurements
in \eqref{eq:VTOL_VRP} be corrupted with constant bias and normally
distributed random noise with a zero mean and a standard deviation
of 0.07 = $\mathcal{N}(0,0.07)$. Let the design parameters be selected
as $\gamma_{o}=0.7$, $k_{o1}=11$, $k_{o2}=10$, $k_{o3}=4$, $k_{\theta1}=1.2$,
$k_{\theta2}=1.2$, $k_{c1}=1$, $k_{c2}=4$, $k_{c3}=4$, and $k_{c4}=2$.

\begin{figure}[H]
	\centering{}\includegraphics[scale=0.32]{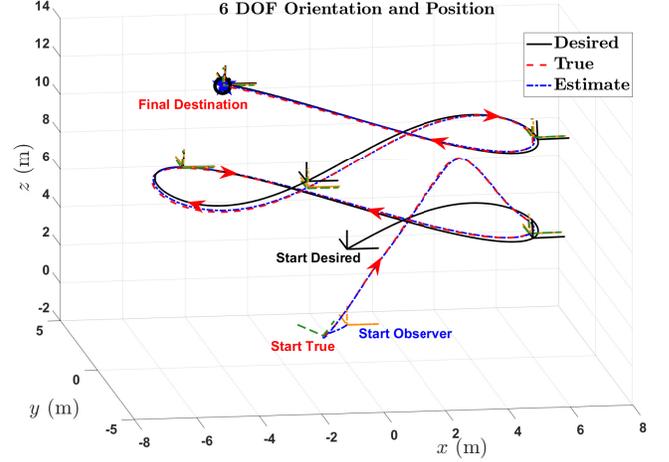}\caption{Output performance of the 6 DoF VTOL-UAV observer-based controller.}
	\label{fig:ObsvCont1}
\end{figure}

\begin{figure}[H]
	\centering{}\includegraphics[scale=0.35]{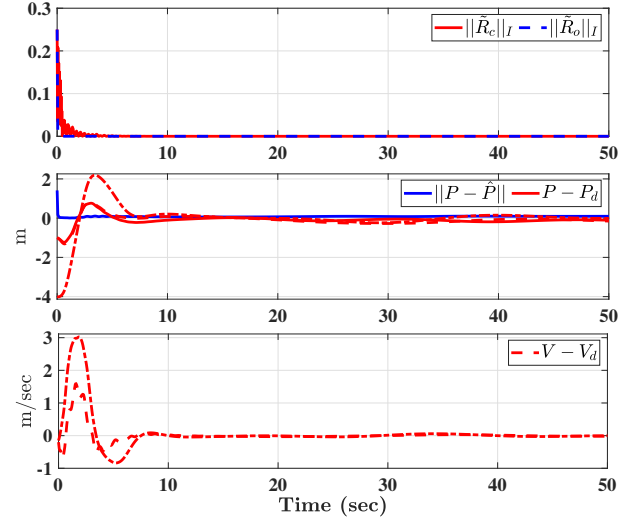}\caption{Estimation and control error trajectories: attitude, angular velocity,
		position, linear velocity.}
	\label{fig:ObsvCont2}
\end{figure}

Fig. \ref{fig:ObsvCont1} depicts the output performance of the novel
observer-based controller comparing the trajectories of the desired,
true, and estimated position of the VTOL-UAV plotted in black solid
line, red dashed line, and blue center line, respectively. The desired,
true, and estimated VTOL-UAV orientations represented by roll, yaw,
and pitch are contrasted using black solid line, green dashed line,
and orange center-line, respectively. Fig. \ref{fig:ObsvCont1} shows
robust tracking performance from a large initialization error to the
desired destination. Furthermore, the convergence of the error trajectories
is demonstrated in Fig. \ref{fig:ObsvCont2}, where the error between
the true and the estimated response plotted in blue is compared to
the error between the true and the desired response plotted in red.
Fig. \ref{fig:ObsvCont2} reveals fast adaptation and accurate tracking
of the observer performance with respect to the true trajectory. Likewise,
the fast convergence of the error signals from large values to the
neighborhood of the origin in Fig. \ref{fig:ObsvCont2}, illustrates
the robust tracking control of the VTOL-UAV to the desired trajectory
enabled by the novel observer-based controller. Fig. \ref{fig:ControlInput}
illustrates bounded control input signals of the rotational torque
and thrust.

\begin{figure}
	\centering{}\includegraphics[scale=0.25]{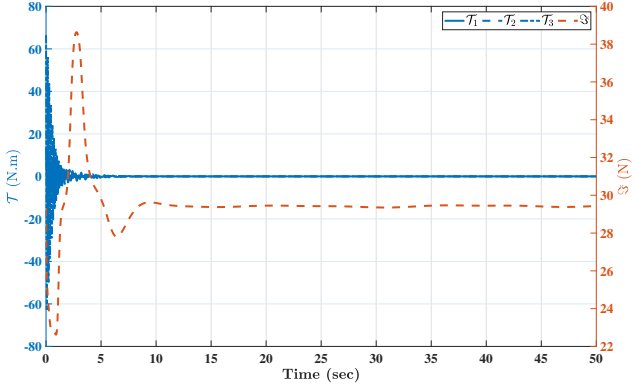}\caption{VTOL-UAV control input: rotational torque and thrust.}
	\label{fig:ControlInput}
\end{figure}

\section{Conclusion \label{sec:SE3_Conclusion}}

This work tackles the estimation and control problem of a VTOL-UAV
traveling in a three-dimensional space (3D) utilizing exclusively
a typical 6-axis IMU (gyroscope and accelerometer) and feature measurements.
This paper introduces a novel direct nonlinear observer that mimics
the true nonlinearity of the VTOL-UAV motion dynamics. The proposed
observer accurately estimates VTOL-UAV motion components, namely attitude,
position, and linear velocity. The combination of the proposed observer
with the novel control laws creates a comprehensive autonomous module,
where the closed-loop error signals of both the observer and the observer-based
controller are guaranteed to be exponentially stable starting from
almost any initial condition. Testing of the proposed approach revealed
strong tracking capabilities for the 3D motion. 

\section*{Acknowledgment}

The authors would like to thank \textbf{Maria Shaposhnikova} for proofreading
the article.

\subsection*{Appendix\label{subsec:Appendix-A}}
\begin{center}
	\textbf{Equivalent Design in Quaternion Form}
	\par\end{center}

\noindent Define $Q=[q_{0},q^{\top}]^{\top}\in\mathbb{S}^{3}$ to
be the VTOL-UAV true unit-quaternion where $q\in\mathbb{R}^{3}$ and
$q_{0}\in\mathbb{R}$ with $\mathbb{S}^{3}=\{\left.Q\in\mathbb{R}^{4}\right|||Q||=1\}$
\cite{hashim2019AtiitudeSurvey,shuster1993survey}. Let $\hat{Q}=[\hat{q}_{0},\hat{q}^{\top}]^{\top}\in\mathbb{S}^{3}$
and $Q_{d}=[q_{d0},q_{d}^{\top}]^{\top}\in\mathbb{S}^{3}$ be the
estimated and the desired unit-quaternion, respectively. Consider
the mapping $\hat{\mathcal{R}}:\mathbb{S}^{3}\rightarrow\mathbb{SO}\left(3\right)$
and $\mathcal{R}_{d}:\mathbb{S}^{3}\rightarrow\mathbb{SO}\left(3\right)$
as \cite{hashim2019AtiitudeSurvey,shuster1993survey}
\[
\begin{cases}
	\hat{\mathcal{R}} & =(\hat{q}_{0}^{2}-||\hat{q}||^{2})\mathbf{I}_{3}+2\hat{q}\hat{q}^{\top}-2\hat{q}_{0}[\hat{q}]_{\times}\\
	\mathcal{R}_{d} & =(q_{d0}^{2}-||q_{d}||^{2})\mathbf{I}_{3}+2q_{d}q_{d}^{\top}-2q_{d0}[q_{d}]_{\times}
\end{cases}
\]
where $\hat{\mathcal{R}},\mathcal{R}_{d}\in\mathbb{SO}\left(3\right)$.
Recall \eqref{eq:VTOL_Measurements} and consider the following direct
measurement set-up:
\[
\begin{cases}
	\boldsymbol{\Upsilon}_{o} & =\mathbf{vex}(\boldsymbol{\mathcal{P}}_{a}(\sum_{i=1}^{n}s_{i}\hat{\mathcal{R}}^{\top}y_{i}(p_{i}-p_{c})^{\top}))\\
	\sum_{i=1}^{n}s_{i}\tilde{y}_{i} & =\sum_{i=1}^{n}s_{i}(\hat{P}+\hat{\mathcal{R}}^{\top}y_{i}-p_{i})
\end{cases}
\]
The unit-quaternion representation equivalent to \eqref{eq:VTOL_ObsvCompact}
is given by
\[
\begin{cases}
	\dot{\hat{b}}_{\Omega} & =\gamma_{o}\hat{\mathcal{R}}\boldsymbol{\Upsilon}_{o}\\
	w_{\Omega} & =k_{o1}\boldsymbol{\Upsilon}_{o}\\
	w_{V} & =k_{o2}\sum_{i=1}^{n}s_{i}\tilde{y}_{i}-\frac{1}{s_{T}}[w_{\Omega}]_{\times}(\sum_{i=1}^{n}s_{i}\tilde{y}_{i}+s_{T}p_{c})\\
	w_{a} & =-ge_{3}+k_{o3}\sum_{i=1}^{n}s_{i}\tilde{y}_{i}
\end{cases}
\]
\[
\begin{cases}
	Y & =\left[\begin{array}{cc}
		0 & -\hat{\Omega}^{\top}\\
		\hat{\Omega} & -[\hat{\Omega}]_{\times}
	\end{array}\right],\hspace{1em}Z=\left[\begin{array}{cc}
		0 & -w_{\Omega}^{\top}\\
		w_{\Omega} & [w_{\Omega}]_{\times}
	\end{array}\right]\\
	\dot{\hat{Q}} & =\frac{1}{2}(Y-Z)\hat{Q}\\
	\dot{\hat{P}} & =\hat{V}-[w_{\Omega}]_{\times}\hat{P}-w_{V}\\
	\dot{\hat{V}} & =-\frac{\Im}{m}\hat{\mathcal{R}}^{\top}e_{3}-[w_{\Omega}]_{\times}\hat{V}-w_{a}
\end{cases}
\]
The control laws in \eqref{eq:VTOL_Tau}-\eqref{eq:VTOL_Thrust} in
terms of quaternion form are given below:
\begin{align*}
	& \begin{cases}
		\boldsymbol{\Upsilon}_{c} & =\mathbf{vex}(\boldsymbol{\mathcal{P}}_{a}(\mathcal{R}_{d}^{\top}\sum_{i=1}^{n}s_{i}y_{i}(p_{i}-p_{c})^{\top}))\\
		w_{c} & =k_{c1}\mathcal{R}_{d}\boldsymbol{\Upsilon}_{c}+k_{c2}(\Omega_{d}-\Omega_{m}+\hat{b}_{\Omega})\\
		\mathcal{T} & =w_{c}+J\dot{\Omega}_{d}-\left[J(\Omega_{m}-\hat{b}_{\Omega})\right]_{\times}\Omega_{d}
	\end{cases}\\
	& \begin{cases}
		\ddot{\theta} & =-k_{\theta1}\tanh(\theta)-k_{\theta2}\tanh(\dot{\theta})\\
		& \hspace{1em}+k_{c3}(\hat{P}-P_{d}-\theta)+k_{c4}(\hat{V}-V_{d}-\dot{\theta})\\
		F & =\ddot{P}_{d}-k_{\theta1}\tanh(\theta)-k_{\theta2}\tanh(\dot{\theta})\\
		\Im & =m||ge_{3}-F||
	\end{cases}
\end{align*}

\balance

\bibliographystyle{IEEEtran}
\bibliography{bib_VTOL_ObsvCont}

\end{document}